\documentclass[10pt]{article} 
\usepackage[accepted]{tmlr}

\usepackage{amsmath}
\usepackage{amssymb}
\usepackage{amsthm}
\usepackage{bm}
\usepackage{color,xcolor} 
\usepackage{graphicx}
\usepackage{hyperref}
\usepackage[capitalise]{cleveref}
\usepackage{booktabs}

\newcommand{\nugget}{\sigma}
\newcommand{\nuggets}{\Sigma}

\newtheorem{theorem}{Theorem}
\newtheorem{proposition}{Proposition}

\newtheorem{definition}{Definition}

\title{Extrapolating from Regularised Solutions for Solving Ill-Conditioned Linear Systems in Machine Learning}

\author{\name Disha Hegde \email d.hegde@soton.ac.uk \\
      \addr University of Southampton, UK
      \AND
      \name Jon Cockayne \email jon.cockayne@soton.ac.uk \\
      \addr University of Southampton, UK
      \AND
      \name Chris. J. Oates \email chris.oates@newcastle.ac.uk\\
      \addr Newcastle University, UK \\ The Alan Turing Institute, UK}

\def\month{06}  
\def\year{2026} 
\def\openreview{\url{https://openreview.net/forum?id=fqbkenUpRa}} 

\begin{document}

\maketitle

\begin{abstract}
Rapid prototyping of algorithms is a critical step in modern machine learning.
Most algorithms exploit linear algebra, creating a need for lightweight numerical routines which -- while potentially sub-optimal for the task at hand -- can be rapidly implemented.
For the numerical solution of ill-conditioned linear systems of equations, the standard solution for prototyping is Tikhonov-regularised inversion using a nugget.
However, selection of the size of nugget is often difficult, and the use of data-adaptive procedures precludes automatic differentiation, introducing instabilities into end-to-end training.
Further, while data-adaptive procedures perform multiple linear solves to select the size of nugget, only the result of one such solve is returned, which we argue is wasteful.
This paper aims to circumvent the above difficulties, presenting \texttt{autonugget}; a \texttt{Python} package for automatic and stable numerical solution of linear systems suitable for rapid prototyping, and fully compatible with automatic differentiation using \texttt{JAX}.
\texttt{autonugget} combines multiple linear solves using Richardson extrapolation to determine the solution of the ill-conditioned system, improving in accuracy over approximations based on a single nugget.

\end{abstract}

\section{Introduction} \label{sec: introduction}
Consider the problem of numerically solving $A x = b$ for $x \in \mathbb{R}^d$.
It will be assumed that $A$ is a symmetric positive definite matrix, but that $A$ is (possibly severely) ill-conditioned and, as a result, solving the linear system with standard methods is challenging.
This setting is ubiquitous in applied machine learning like Bayesian optimisation \citep{garnett_bayesoptbook_2023}, emulation \citep{gramacy_surrogates_nodate}, and probabilistic numerics \citep{hennig_probabilistic_2022}. 

In machine learning, a widely-used approach to improving the conditioning of the system so that it can be solved is to use \emph{Tikhonov regularisation:}
\begin{align}
    x_\nugget := (A + \nugget I)^{-1} b, \qquad \nugget \geq 0, \label{eq: tik}
\end{align}
where $x_\nugget \rightarrow x$ in exact arithmetic as the regularisation parameter $\nugget \rightarrow 0$.
The regularisation parameter $\nugget$ is referred to as a \emph{nugget}.
For sufficiently regular matrices $A$, such as positive semi-definite matrices, the conditioning of $A + \nugget I$ improves monotonically as $\nugget$ is increased.
However, in practice this often results in a conflation of two ideals:
\begin{enumerate}
    \item \textbf{Regularisation:} as a technique for penalising large values in the solution vector, improving generalisation performance. In this setting the focus is on $\nugget > 0$.
    \item \textbf{Ill-conditioning:} as a technique for improving the numerical performance of solver routines. In this setting the quantity of interest is $\nugget = 0$, but the ill-conditioning of $A$ causes numerical problems in the computational pipeline.
\end{enumerate}
In this paper, the focus is solely on the latter: we assume that the nugget is being used only as a tool to improve the conditioning of $A$.

When using Tikhonov regularisation to improve conditioning, one usually takes $\nugget$ to be just large enough that a numerical approximation $\hat{x}_\nugget$ to $x_\nugget$ can be obtained to a required level of precision using a direct method (e.g. LU-factorisation).
The popularity of Tikhonov-regularisation for prototyping of algorithms is due to the simplicity with which it can be implemented, and the presence of only a single degree of freedom to be selected.
For subsequent code optimisation, one would ideally upgrade from a Tikhonov-regularised direct method to a more scalable or sophisticated linear algebra routine, such as a preconditioned iterative method \citep{trefethen2022numerical}, though we note that Tikhonov regularisation often forms a component of these routines, e.g. in constructing preconditioners \citep{cutajar2016preconditioning} and in certain Krylov subspace methods \citep{gazzola2015krylov}.
Moreover, while ideally the numerics should be improved to address instabilities resulting from poor conditioning, how often this actually happens could be questioned.

The aim of this paper is to address three key issues that arise in applications of Tikhonov regularisation to address poor conditioning: 
\begin{enumerate}
\item \textbf{Size of the nugget:}  
The critical minimum value $\nugget_{\star}$ of $\nugget$ at which $x_\nugget$ can be stably computed (i.e. as $\hat{x}_\nugget$) will be unknown in general, and will depend sensitively on the actual values in $A$ and $b$. 
\item \textbf{End-to-end training:}
Data-adaptive choices of $\nugget$, for example based on the condition number of $A + \nugget I$ remaining below a specified threshold, are typically incompatible with automatic differentiation with respect to any variables $\theta$ contained in $A \equiv A_\theta$ and $b \equiv b_\theta$.
This precludes straight-forward end-to-end training of machine learning algorithms via gradient descent.
\item \textbf{Data-efficiency:}
While data-adaptive procedures perform multiple linear solves to select the size of nugget, only the result of one such solve is returned, which we argue is wasteful.
\end{enumerate}

To resolve these difficulties we present \texttt{autonugget}; a \texttt{Python} package for hassle-free stable numerical solution of linear systems, fully compatible with automatic differentiation using \texttt{JAX} \citep{jax2018github}. 
To address data-efficiency, we explore the benefit of combining multiple linear solves corresponding to nuggets $\nuggets = \{\nugget_i\}_{i=1}^n$ using Richardson extrapolation, which enables \texttt{autonugget} to improve in accuracy over approximations based on a single nugget.
Automatic selection of $\nuggets$ is performed by letting $\nuggets = h \nuggets_{\mathrm{ref}}$ for some reference set $\nuggets_{\mathrm{ref}}$ and selecting $h$ to (approximately) balance the error due to extrapolation with the error due to numerical solution of \eqref{eq: tik}.
The \texttt{autonugget} package is described in \Cref{sec: package} and illustrated in \Cref{fig: cartoon}.
Empirical evidence in support is presented in \Cref{sec: experiments}.

\begin{figure}[t!]
\centering
\includegraphics[width = 0.5\textwidth]{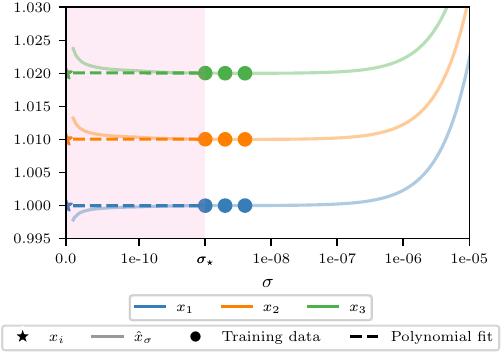}
\caption{Illustration of \texttt{autonugget}: 
Here we plot numerical approximations $\hat{x}_\nugget$ to $x_\nugget$ as the nugget $\nugget$ is varied (to aid in visualisation, only the first three coordinates are displayed).
The numerical approximations were obtained using \texttt{linalg.solve} in \texttt{numpy} \citep{harris2020array}, a direct method based on LU factorisation.
There is a critical value $\nugget_{\star}$ (shaded) below which $\hat{x}_\nugget$ fails to reliably approximate $x_\nugget$.
On the other hand, reliable calculations can be performed when $\nugget \geq \nugget_{\star}$, and extrapolation of these reliable data can even provide a better approximation to $x$ compared to $\hat{x}_{\nugget_{\star}}$.}
\label{fig: cartoon}
\end{figure}

\subsection{Related Work}

The fundamental importance of linear algebra ensures \eqref{eq: tik} has been theoretically and empirically well-studied.
On the other hand, we argue that there is an important gap at the interface of theory and practice with respect to existing work:

\paragraph{Machine Learning}

The role of $\nugget$ is often conflated in applied machine learning, on the one hand being a numerical parameter that one would ideally want to be small, while on the other hand serving as a `regulariser' that affects downstream performance on a particular task.
From this perspective, $\nugget$ is often folded into the hyper-parameters of a machine learning algorithm and tuned end-to-end (e.g., in Bayesian inference through marginalisation).
Similarly, amortised approaches, which aim to learn a mapping from the inputs of the machine learning task to an appropriate value for $\nugget$, have been pursued \citep{chung2017learning,alberti2021learning,de2022machine}.
However, both approaches represent a (possibly substantial) increase in the overall training cost.
Further, there are important situations where one seeks the exact solution $x$ of \eqref{eq: tik} as $\nugget \rightarrow 0$; for instance in regression applications, where Tikhonov regularisation can lead to predictions that are under-confident \citep{andrianakis2012effect}.
Guidance on the practical choice of $\nugget$ for exact solution is then limited; in our experience, manual choices based on the (in)sensitivity of the downstream output are often used.
As a data-adaptive alternative, one can select $\nugget$ large enough that the condition number of $A + \nugget I$ remains below a specified threshold\footnote{The \emph{condition number} $\kappa(A)$ of a non-singular matrix $A$ is defined as the ratio $\lambda_{\max}(A) / \lambda_{\min}(A)$ of the largest and smallest eigenvalues of $A$; a smaller condition number is often associated with improved accuracy and/or faster convergence of numerical methods \citep{trefethen2022numerical}. }; however, the introduction of conditional expressions can introduce an incompatibility with automatic differentiation, which can preclude gradient-based training.
On a different note, the potential to accelerate computation in machine learning using the classical idea of Richardson extrapolation is receiving renewed attention \citep{bach2021effectiveness,oates2025probabilistic} and its potential to accelerate linear algebra computations in modern machine learning has yet to be explored.

\paragraph{Numerical Analysis}

Numerical linear algebra is well-developed and the concepts that we discuss in this work are not novel.
For instance, the theoretical application of Richardson extrapolation in the setting of Tikhonov regularisation was discussed as far back as \citet{groetsch1979extrapolation,thomas1979approximation}.
Theoretical insight into the data-adaptive selection of $\nugget$ can be found in the works of \citet{gfrerer1987posteriori,engl1987choice,liu2013dynamical}.
The topic has fallen out of fashion in numerical analysis as attention has turned to designing bespoke algorithms for particular problems, often with applications to partial differential equations in mind \citep{mardal2011preconditioning,kunoth2018splines}. 
However, the sustained popularity of Tikhonov regularisation as a prototyping tool in modern machine learning suggests the time is right to return to this subject.
In particular, there is a strong demand for a differentiable software implementation that provides the exact solution of \eqref{eq: tik} as $\nugget \rightarrow 0$, which the numerical analysis community have to-date not provided.

\section{Methodology}
This section presents \texttt{autonugget}.
The premise is that there is typically a critical value $\nugget_{\star}$ such that the numerical approximation $\hat{x}_\nugget$ to $x_\nugget$ fails to be reliable for all $\nugget < \nugget_{\star}$.
If one was able to estimate $\nugget_{\star}$, then one could generate data $\hat{x}_\nugget$ from the regime $\nugget \geq \nugget_{\star}$ and attempt to extrapolate these reliable data to predict the $\nugget \rightarrow 0$ limit (c.f. \Cref{fig: cartoon}).
This amounts to an application of Richardson extrapolation, as explained in \Cref{sec: extrap}, and we prove that this enables convergence acceleration in \Cref{sec: theory}.
The problem of identifying $\nugget_{\star}$ is deferred to \Cref{subsec: select lambda}.

\paragraph{Notation}

Let $x \in \mathbb{R}^d$ and $A \in \mathbb{R}^{d \times d}$.
The following norms will be used: $\|x\|_\infty := \max\{|x_i|\}_{i=1}^d$, $\|x\|_2 := (\sum x_i^2)^{1/2}$, and $\|A\|_{\mathrm{op}} := \sup_{\|x\|_2 = 1} \|Ax\|_2$.
The minimum and maximum eigenvalues of $A$ are denoted $\lambda_{\min}(A)$ and $\lambda_{\max}(A)$, and the condition number is denoted $\kappa(A) := \lambda_{\max}(A) / \lambda_{\min}(A)$.
The set of real-valued functions on a set $S \subset \mathbb{R}$ whose derivatives up to order $r$ are continuous is denoted $C^r(S , \mathbb{R})$.
For $f \in C^r(S,\mathbb{R})$ and $S \subset \mathbb{R}$ bounded, let $\|f\|_\infty := \sup_{s \in S} |f(s)|$ and $\|f\|_{\infty,\nuggets} := \sup\{f(\nugget)\}_{\nugget \in \nuggets}$ for $\nuggets \subset S$.
Further, for a linear functional $\Pi$ on $C^0(S , \mathbb{R})$, the operator norm
\begin{align}
\|\Pi\|_{\mathrm{op}} := \sup_{0 \neq f \in C^0(S,\mathbb{R})} \frac{|\Pi(f)|}{\|f\|_\infty}    \label{eq: op norm}
\end{align}
will be used.

\subsection{Extrapolation Estimator}
\label{sec: extrap}

Richardson extrapolation \citep{richardson1911ix} is a classical idea from numerical analysis, whose potential in modern machine learning and statistics is only beginning to be appreciated \citep{bach2021effectiveness,oates2025probabilistic}.
In brief, the idea is to construct polynomial approximations of the coordinate maps $\nugget \mapsto [x_\nugget]_i$ and to extrapolate the fitted polynomial to $\nugget = 0$.
Since in practice we only have access to numerical approximations $\hat{x}_\nugget$, we aim to work with data for which $\nugget \geq \nugget_{\min}$ so that we can substitute $x_\nugget$ with a reliable approximation $\hat{x}_\nugget$ produced using a direct method.

Let $\mathcal{P} := \mathrm{span}\{p_1 , \dots , p_m\}$ where each $p_i : [0,\infty) \rightarrow \mathbb{R}$ is a polynomial.
For the purposes of this paper the constant function is always an element of $\mathcal{P}$.
Given a set $\nuggets = \{\nugget_i\}_{i=1}^n$, the \emph{Vandermonde matrix} is denoted
\begin{align*}
\mathbf{V}(\nuggets) := \left[ \begin{array}{ccc} p_1(\nugget_1) & \dots & p_m(\nugget_1) \\ \vdots & & \vdots \\ p_1(\nugget_n) & \dots & p_m(\nugget_n)
  \end{array} \right] .
\end{align*}
In the case where $m$ and $n$ are equal, the \emph{Vandermonde determinant} is denoted $\mathrm{VDM}(\nuggets) = \mathrm{det}(\mathbf{V}(\nuggets))$.
The set $\nuggets$ is called $\mathcal{P}$-\emph{unisolvent} if $\mathrm{VDM}(\nuggets) \neq 0$.
Assuming $\nuggets$ is $\mathcal{P}$-\emph{unisolvent}, we can approximate $x$ using polynomial extrapolation applied coordinatewise to the training dataset.

\begin{definition}[Extrapolation Estimator] \label{def: extrap estim}
Let $f : [0,\infty) \rightarrow \mathbb{R}^d$.
Let $f_n \in \mathcal{P}^d$ interpolate $f$ (coordinatewise) on $\nuggets = \{\nugget_i\}_{i=1}^n$.
Then $f_n(0)$ is called an \emph{extrapolation estimator} of $f(0) \in \mathbb{R}^d$.
\end{definition}

\noindent The extrapolation estimator $f_n(0)$ exists and is unique provided that $\nuggets$ is $\mathcal{P}$-unisolvent.
In this paper $f(\nugget) = x_\nugget$.
The idea in \texttt{autonugget} is to estimate $f(0)$ using an extrapolation approach.
To accomplish this we distinguish between the extrapolator $f_n$, based on exact solutions to the linear system, $x_\nugget$, and the extrapolator $\hat{f}_n$ based on approximations $\hat{x}_\nugget$.
We focus on determining a set $\nuggets$ which balances the tension between accuracy of the approximations $\hat{x}_\nugget$, which is improved for large $\nugget$, and accuracy of the extrapolator $f(0)$ which is improved for small $\nugget$. 
The computational cost of the extrapolation estimator is $n$ times that of a single linear solve, since polynomial interpolation requires negligible overhead.
Typically $n$ will be small, e.g. $n \in \{2,3,4\}$, so that the overall computational cost of \texttt{autonugget} is of the same order as a single application of a direct method.

\subsection{Analysis in Exact Arithmetic}
\label{sec: theory}

To understand the benefit of extrapolation we begin by analysing the error of extrapolation applied to exact data $x_\nugget$; the case where numerical approximation of $x_\nugget$  is taken into account is deferred to \Cref{subsec: select lambda}.

Let $\ell_i(\cdot  ; \nuggets)$ denote the \emph{Lagrange polynomials}, defined as the elements of $\mathcal{P}$ for which $\ell_i(\nugget_j ; \nuggets) = \delta_{i,j}$.
These can be computed as 
\begin{align*}
\ell_i(\nugget ; \nuggets) = \frac{\mathrm{VDM}(\{\nugget\} \cup (\nuggets \setminus \{\nugget_i\}))}{\mathrm{VDM}(\nuggets)} .
\end{align*}
The \emph{Lebesgue function} of $\nuggets$ is denoted
\begin{align}
\lambda(\nugget ; \nuggets) = \sum_{i=1}^n | \ell_i(\nugget ; \nuggets) | . \label{eq: leb fn}
\end{align}

\noindent The following result, which we prove in \Cref{app: proof extrap error}, indicates the potential benefit from extrapolation in this context:

\begin{theorem}[Extrapolation error bound]
\label{cor: extrap error}
Let $f(\nugget) = (A + \nugget I)^{-1} b$ where $A$ is a symmetric positive definite matrix.
Let $\mathcal{P} = \mathrm{span}\{1 , \nugget , \dots, \nugget^{n-1}\}$.
Let $\nuggets_{\mathrm{ref}} = \{\nugget_i\}_{i=1}^n$ be $\mathcal{P}$-unisolvent.
Let $f_n^h \in \mathcal{P}^d$ interpolate $f$ (coordinatewise) on $\nuggets_h = \{h \nugget_i\}_{i=1}^n$ for $h \in (0,1]$.
Then 
\begin{align*}
&\underbrace{ \| f(0) - f_n^h(0) \|_\infty }_{\text{\normalfont extrapolation error}} \leq \underbrace{ ( 1 + \lambda(0; \nuggets_{\mathrm{ref}}) ) \nugget_{\max}^n \lambda_{\min}(A)^{-(n+1)} \|b\|_2  }_{\text{\normalfont constant in $h$}} \; \; \underbrace{ \phantom{|_p} h^n \phantom{|_p} }_{\text{\normalfont acceleration }} ,
\end{align*}
where $\nugget_{\max} := \max\{\nugget_i\}_{i=1}^n$.
\end{theorem}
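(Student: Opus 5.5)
Since the extrapolation estimator acts coordinatewise, we fix a coordinate $j$ and set $g(\nugget) := [f(\nugget)]_j$ on $S = [0, h\nugget_{\max}]$. The standard Lebesgue-constant argument applies. Let $p \in \mathcal{P}$ be any polynomial of degree at most $n-1$. Because interpolation on $\nuggets_h$ reproduces $\mathcal{P}$, we have
\begin{align*}
g(0) - f_n^h(0)_j = \big(g(0) - p(0)\big) - \sum_{i=1}^n \ell_i(0;\nuggets_h)\big(g(h\nugget_i) - p(h\nugget_i)\big).
\end{align*}
Hence $|g(0) - f_n^h(0)_j| \le (1 + \lambda(0;\nuggets_h)) \sup_{s\in S}|g(s)-p(s)|$. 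Next we check that the Lebesgue function is invariant under scaling. From the Vandermonde-ratio formula, or directly from $\ell_i(\nugget;\nuggets) = \prod_{k\neq i} (\nugget-\nugget_k)/(\nugget_i-\nugget_k)$, we get $\ell_i(h\nugget;\nuggets_h) = \ell_i(\nugget;\nuggets_{\mathrm{ref}})$. Evaluating at $\nugget=0$ gives $\lambda(0;\nuggets_h) = \lambda(0;\nuggets_{\mathrm{ref}})$. This is what keeps the constant independent of $h$.

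We take $p$ to be the degree-$(n-1)$ Taylor polynomial of $g$ at $0$. This is natural because $g$ is smooth on $[0,\infty)$: since $A$ is SPD, $A+\nugget I$ is invertible for all $\nugget > -\lambda_{\min}(A)$. Differentiating the resolvent gives $f^{(k)}(\nugget) = (-1)^k k!\,(A+\nugget I)^{-(k+1)} b$. The Lagrange remainder then gives, for $s \in S$,
\begin{align*}
|g(s)-p(s)| \le \frac{s^n}{n!}\sup_{t\in[0,s]}|g^{(n)}(t)| \le s^n \sup_{t \ge 0}\|(A+tI)^{-(n+1)}\|_{\mathrm{op}}\|b\|_2 \le (h\nugget_{\max})^n \lambda_{\min}(A)^{-(n+1)}\|b\|_2 .
\end{align*}
Here we used $|[v]_j| \le \|v\|_\infty \le \|v\|_2$, and that $\|(A+tI)^{-1}\|_{\mathrm{op}} = (\lambda_{\min}(A)+t)^{-1} \le \lambda_{\min}(A)^{-1}$ by the spectral theorem, since $A$ is symmetric.

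Combining these bounds and taking the maximum over $j$ gives the stated inequality. The only delicate points are the scaling invariance of $\lambda(0;\cdot)$, which relies on $\mathcal{P}$ being the full space of polynomials of degree $<n$ (so the monomial basis scales homogeneously), and keeping the $n!$ cancellation explicit. No other step should be an obstacle. A possible alternative is to use the exact spectral representation $g(\nugget) = \sum_k u_{jk}(u_k^\top b)/(\lambda_k+\nugget)$ and bound the interpolation error of each rational term. However, the Taylor argument above already yields the claimed constant, so I would use it.
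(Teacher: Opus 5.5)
Your proposal is correct and follows essentially the same route as the paper. Both arguments combine a Lebesgue-constant bound for the extrapolation, the invariance $\lambda(0;\Sigma_h)=\lambda(0;\Sigma_{\mathrm{ref}})$, and a degree-$(n-1)$ Taylor remainder controlled through $f^{(n)}(\sigma)=(-1)^n n!\,(A+\sigma I)^{-(n+1)}b$ and the spectral bound $\lambda_{\min}(A+\sigma I)\ge\lambda_{\min}(A)$. The differences are cosmetic: you derive the Lebesgue bound from polynomial reproduction with an explicit $p$ rather than a best approximant, and you verify the scaling invariance that the paper only asserts.
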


\noindent To interpret \Cref{cor: extrap error} one should consider $h$ to be varying and $n$ to be fixed.
The extrapolation error is then seen to converge at a rate $O(h^n)$, where the $n$ nuggets that are used to generate the dataset are each of size $O(h)$.
As a sanity check, for the case of a single linear solve and no extrapolation, the map $\nugget \mapsto x_\nugget$ is continuous and the error $x - x_{h\nugget}$ is indeed $O(h)$.
\Cref{cor: extrap error} thus shows that the convergence rate (in $h$) is strictly improved when $n \geq 2$ data are used.
There is of course also a trade-off, in terms of the $n$-dependent constants appearing in the bound\footnote{For carefully chosen $\nuggets$ (e.g. Chebyshev nodes) the Lebesgue constant $\lambda(0,\nuggets)$ grows as $O(\log(n))$, so the main concern is the term $(h \nugget_{\max})^n \lambda_{\min}(A)^{-(n+1)}$.} and in terms of the computational cost\footnote{The computational complexity is $O(n)$, but if one has access to $n$ parallel processors the computational time becomes $O(1)$.}.

In practice however we do not have access to $x_\nugget$, only to a approximation $\hat{x}_\nugget$ obtained via the numerical solution of \eqref{eq: tik}.
Care is needed when using polynomial extrapolation applied to noisy data, because numerical errors can be amplified by higher-order terms in the polynomial.
Understanding the trade-off between convergence acceleration and stability in finite precision arithmetic is the focus of \Cref{subsec: select lambda}.

\subsection{Analysis in Finite Precision Arithmetic}
\label{subsec: select lambda}

To understand the practical performance of the extrapolation estimator we will characterise the estimator as a \emph{projection} of the data-generating function, and then analyse how the projection is affected when the input data are corrupted.
As before, we consider a polynomial basis $\mathcal{P}$ and a collection $\nuggets = \{\nugget_i\}_{i=1}^n$.
Denote the projection
\begin{align*}
\Pi_\nuggets : C^0([0,\nugget_{\max}],\mathbb{R}^d) & \rightarrow \mathbb{R}^d \\
f & \mapsto f_n(0) ,
\end{align*}
where $f_n(0)$ is the extrapolation estimator of $f(0)$ based on $\mathcal{P}$ and $\nuggets$ (cf. \Cref{def: extrap estim}).
Note that $\Pi_\nuggets[f]$ can be defined for \emph{any} function $f$, including $\hat{f}$, but one cannot hope to extrapolate a function that is discontinuous at $0$.

Recall that we cannot exactly evaluate $f(\nugget) = x_\nugget$, but rather we obtain $\hat{f}(\nugget) = \hat{x}_\nugget$ where the error $\hat{x}_\nugget - x_\nugget$ can be large when $\nugget$ is small.
The extrapolation estimator applied to $\hat{f}$, which we seek to analyse, is then $\Pi_{\nuggets}[\hat{f}]$.
For any norm $\|\cdot\|$, the total error can be decomposed as
\begin{align}
\underbrace{ \| f(0) - \Pi_{\nuggets}[\hat{f}] \| }_{\text{actual error}} \leq \underbrace{ \| f(0) - \Pi_{\nuggets}[f] \| }_{\text{extrapolation error}} + \underbrace{ \| \Pi_{\nuggets}[f] - \Pi_{\nuggets}[\hat{f}] \| }_{\text{numerical error}} . \label{eq: decompose}
\end{align}
As before we fix a reference design $\nuggets_{\mathrm{ref}} = \{\nugget_i\}_{i=1}^n$ and consider $\nuggets_h = \{h \nugget_i\}_{i=1}^n$ for $h \in (0,1]$.
From \Cref{cor: extrap error} we have a bound on the extrapolation error term in \eqref{eq: decompose}.
For the numerical error term we have the following bound, whose proof is contained in \Cref{app: numerical proof}:

\begin{theorem}[Numerical error bound]
\label{thm: num er}
In the setting of \Cref{cor: extrap error}, 
$$
\underbrace{ \| \Pi_{\nuggets_h}[f] - \Pi_{\nuggets_h}[\hat{f}] \|_\infty }_{\text{\normalfont numerical error}} 
\leq \lambda(0;\nuggets_{\mathrm{ref}}) \| f - \hat{f}  \|_{\infty, \nuggets_h} .
$$
\end{theorem}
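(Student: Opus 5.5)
The plan is to use the Lagrange form of the interpolant to write $\Pi_{\nuggets_h}$ as an explicit linear functional of the data, and then to show that the Lebesgue function evaluated at $0$ is invariant under the scaling $\nuggets_{\mathrm{ref}} \mapsto \nuggets_h$.

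\textbf{Step 1: Lagrange representation.} Since $\nuggets_h$ is $\mathcal{P}$-unisolvent whenever $\nuggets_{\mathrm{ref}}$ is (justified in Step 3), the coordinatewise interpolant of any $g:[0,h\nugget_{\max}]\to\mathbb{R}^d$ on $\nuggets_h$ can be written as
\begin{align*}
g_n(\nugget) = \sum_{i=1}^n \ell_i(\nugget;\nuggets_h)\, g(h\nugget_i) .
\end{align*}
Hence $\Pi_{\nuggets_h}[g] = \sum_{i=1}^n \ell_i(0;\nuggets_h)\, g(h\nugget_i)$. This expression is linear in $g$, so
\begin{align*}
\Pi_{\nuggets_h}[f]-\Pi_{\nuggets_h}[\hat f] = \sum_{i=1}^n \ell_i(0;\nuggets_h)\,\bigl(f(h\nugget_i)-\hat f(h\nugget_i)\bigr).
\end{align*}

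\textbf{Step 2: pointwise bound.} Take the $j$th coordinate and apply the triangle inequality:
\begin{align*}
\bigl|[\Pi_{\nuggets_h}[f]-\Pi_{\nuggets_h}[\hat f]]_j\bigr| \le \sum_{i=1}^n |\ell_i(0;\nuggets_h)|\,\max_{i} \|f(h\nugget_i)-\hat f(h\nugget_i)\|_\infty = \lambda(0;\nuggets_h)\,\|f-\hat f\|_{\infty,\nuggets_h}.
\end{align*}
Here $\|f-\hat f\|_{\infty,\nuggets_h}$ is read as the supremum over $\nugget\in\nuggets_h$ of $\|f(\nugget)-\hat f(\nugget)\|_\infty$, which is the vector-valued version of the notation introduced earlier. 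Taking the maximum over $j$ gives the same bound for the $\|\cdot\|_\infty$ norm of the difference.

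\textbf{Step 3: scale invariance.} This is the only step with real content. For $\mathcal{P}=\mathrm{span}\{1,\nugget,\dots,\nugget^{n-1}\}$ the Lagrange basis has the product form
\begin{align*}
\ell_i(\nugget;\nuggets) = \prod_{k\ne i}\frac{\nugget-\nugget_k}{\nugget_i-\nugget_k}.
\end{align*}
Substituting $h\nugget_k$ for every node gives $\ell_i(h\nugget;\nuggets_h)=\ell_i(\nugget;\nuggets_{\mathrm{ref}})$, because each factor $h$ cancels between numerator and denominator. Setting $\nugget=0$ yields $\ell_i(0;\nuggets_h)=\ell_i(0;\nuggets_{\mathrm{ref}})$, and therefore $\lambda(0;\nuggets_h)=\lambda(0;\nuggets_{\mathrm{ref}})$.

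The same substitution settles unisolvence. In the VDM formula each column $k$ scales by $h^{k-1}$, so $\mathrm{VDM}(\nuggets_h)=h^{n(n-1)/2}\,\mathrm{VDM}(\nuggets_{\mathrm{ref}})$, which is nonzero for $h>0$. Combining this with Step 2 completes the proof.
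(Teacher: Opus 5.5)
Your proof is correct and rests on the same idea as the paper's: write $\Pi_{\nuggets_h}[g]=\sum_i \ell_i(0;\nuggets_h)\,g(h\nugget_i)$, bound the error by the Lebesgue function at $0$, and use its invariance under the scaling $\nuggets_{\mathrm{ref}}\mapsto\nuggets_h$. The paper instead bounds the operator norm of $\Pi_{\nuggets_h}$ on $C^0$, takes an infimum over continuous extensions of the nodal data, and only asserts $\lambda(0;\nuggets_h)=\lambda(0;\nuggets_{\mathrm{ref}})$ and the unisolvence of $\nuggets_h$; your direct nodal computation makes the extension step unnecessary and actually proves both of those facts.
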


\noindent The result is intuitively clear; since the extrapolation estimator is a projection and projections are linear maps, the error in the extrapolation estimator is linear in the numerical error $f - \hat{f}$ associated to the dataset.
The magnitude of $\|f - \hat{f} \|_{\infty, \nuggets_h}$ is typically determined by $\|f(h \nugget_{\min}) - \hat{f}(h \nugget_{\min}) \|_\infty$ where $\nugget_{\min}$ is the smallest element of $\nuggets_{\mathrm{ref}}$.
Further, for direct solvers the magnitude of the relative error is typically determined by the product of machine precision and the condition number \citep{weiss1986average}:
\begin{align}
\frac{ \|f(h\nugget_{\min}) - \hat{f}(h\nugget_{\min})\|_{\infty} }{ \|f(h\nugget_{\min})\|_\infty } \lessapprox \epsilon_{\mathrm{MP}} \kappa(A + h\nugget_{\min} I)   \label{eq: er bd}
\end{align}
where $\epsilon_{\mathrm{MP}}$ is the machine precision (e.g. typically double precision; $10^{-16}$).
Combining these observations, we obtain
\begin{align}
&\underbrace{ \| \Pi_{\nuggets_h}[f] - \Pi_{\nuggets_h}[\hat{f}] \|_\infty }_{\text{\normalfont numerical error}} \lessapprox \lambda(0;\nuggets_{\mathrm{ref}}) \epsilon_{\mathrm{MP}} \kappa(A + h \nugget_{\min} I) \|f(h \nugget_{\min})\|_\infty  \label{eq: num er 1}
\end{align}
and to arrive at a practical upper-bound we use 
\begin{align}
\|f(h \nugget_{\min})\|_\infty \approx \|f(0)\|_\infty \lessapprox \lambda_{\min}(A)^{-1} \|b\|_2 \label{eq: num er 2}
\end{align}
Combining \Cref{cor: extrap error,thm: num er} and \Cref{eq: num er 1,eq: num er 2} as in \eqref{eq: decompose} we obtain a practically-relevant overall error bound which can be used to select $h$, as explained next.

\subsection{Selecting \texorpdfstring{$h$}{h}}
\label{sec: select h}

The aim now is to select $h$ for which the bound obtained from \eqref{eq: decompose} and \Cref{cor: extrap error,thm: num er} and their subsequent discussion is minimised.
Specifically, we seek to balance the size of the the numerical error and the extrapolation error, i.e.
\begin{align*}
&( 1 + \lambda(0; \nuggets_{\mathrm{ref}}) ) \nugget_{\max}^n \lambda_{\min}(A)^{-(n+1)} \|b\|_2    h^n =  \lambda(0; \nuggets_{\mathrm{ref}}) \epsilon_{\mathrm{MP}} \kappa(A + h \nugget_{\min} I )  \lambda_{\min}(A)^{-1} \|b\|_2
\end{align*}
and to this end we propose to use the value of $h$ for which
\begin{align}
\frac{ \kappa( A + h \nugget_{\min} I  ) }{ h^n }  
= 
\frac{ ( 1 + \lambda(0; \nuggets_{\mathrm{ref}}) ) }{ \lambda(0; \nuggets_{\mathrm{ref}}) } \frac{ \nugget_{\max}^n  }{ \lambda_{\min}(A)^n \epsilon_{\mathrm{MP}}  }    \label{eq: criterion}
\end{align}
The left hand side of \eqref{eq: criterion} is monotonic in $h$, diverging as $h \rightarrow 0$ and vanishing as $h \rightarrow \infty$, and thus there exists a unique solution $h_\star$ to \eqref{eq: criterion}.
For well-conditioned $A$, \eqref{eq: criterion} implies that
\begin{align}
h_\star  \propto ( \epsilon_{\mathrm{MP}} \kappa(A) )^{1/n} \frac{ \lambda_{\min}(A) }{ \nugget_{\max} } \asymp  \epsilon_{\mathrm{MP}}^{1/n} , \label{eq: hstar well}
\end{align}
where we have treated the Lebesgue functions as constants in the proportionality statement.
Conversely, for near-singular $A$ we have $\kappa(A + h \nugget_{\min} I) \asymp \lambda_{\max}(A) / ( h \nugget_{\min} )$ and thus an analogous calculation based on \eqref{eq: criterion} implies that
\begin{align}
h_\star \asymp \epsilon_{\mathrm{MP}}^{1/(n+1)}   . \label{eq: hstar ill}
\end{align}
As expected, \eqref{eq: hstar ill} is larger than \eqref{eq: hstar well} in general, indicating that for near-singular $A$ a larger nugget is required.
More interestingly, we see that extrapolation with a larger number $n$ of data enables larger nuggets to be used; i.e. extrapolation can confer additional numerical stability, as well as accuracy, by enabling the size of the nuggets to be increased.
Typical instances of the minimum critical size of nugget $\nugget_\star = h_\star \nugget_{\min}$ are depicted in \Cref{fig: selecting h}.

Calculating the minimum eigenvalue of $A$ is typically as hard as solving the linear system itself, but for a symmetric positive definite matrix $M$ the smallest eigenvalue can be approximated based on the identity
$$
\lambda_{\min}(M)^{-1} = \frac{1}{\inf_{\|v\|_2 = 1} \|M v \|_2} = \sup_{\|v\|_2 = 1} \|M v \|_2^{-1}
$$
and substituting the supremum for a maximum over a finite set.
For \texttt{autonugget} we used $\min\{100, 0.1d\}$ vectors $v$ randomly sampled from standard normal distribution, though more sophisticated methods are available in certain settings \citep{drmavc2006computing,ye2018accurate}.

\begin{figure}[t!]
\centering
\includegraphics[width = \textwidth]{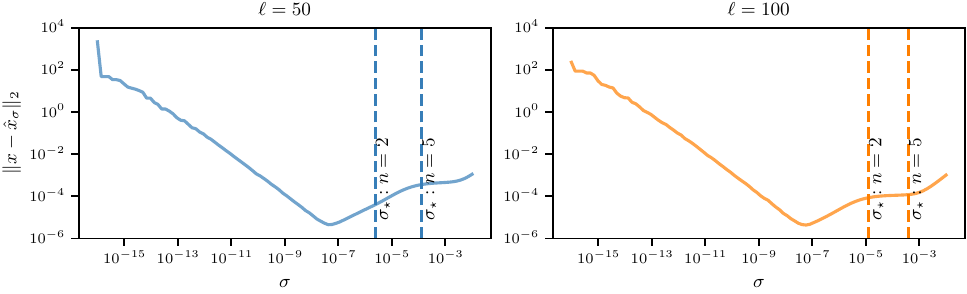} 

\caption{Identification of the critical value $\nugget_{\star}$, below which $\hat{x}_\nugget$ fails to be a reliable approximation to $x_\nugget$, using the approach proposed in \Cref{sec: select h}. 
Here the matrix $A$ was a Gram matrix associated to a kernel with length-scale $\ell$; larger values of $\ell$ are associated with $A$ being more ill-conditioned.
The numerical approximations were obtained using \texttt{linalg.solve} in \texttt{numpy}, a direct method based on LU factorisation.
Our estimator $\nugget_{\star}$ is subject to slight fluctuations owing to the stochastic minimum eigenvalue approximation, but it always appears to be slightly risk-averse, favouring slightly larger nuggets than are strictly needed, which we view as a desirable property of the method.
}
\label{fig: selecting h}
\end{figure}

\section{The \texttt{autonugget} Package}
\label{sec: package}
A \texttt{Python} implementation of \texttt{autonugget} can be installed from \texttt{Github}\footnote{\url{https://github.com/hegdedisha/autonugget}}. 
For standard usage, suitable for rapid prototyping, one simply replaces the conventional direct method, e.g.
\begin{align}
\texttt{x = np.linalg.solve(A,b)}  \label{eq: np}
\end{align}
with 
$$
\texttt{x = autonugget(A,b)} 
$$
where default settings of \texttt{autonugget} are used.
The reference design $\nuggets_{\mathrm{ref}}$ can optionally be specified, as described in \Cref{app: package}.
For simplicity we set $\nuggets_{\mathrm{ref}} = \{ 2^j \nugget_{\min} : 0 \leq j \leq m \}$ where $m$ is user-specified (defaults to $m=1$), since such geometric grids are known to be well-suited to polynomial extrapolation \citep{liem1995splitting}.

A distinguishing feature of \texttt{autonugget} is that it is \texttt{JAX}-compatible, enabling end-to-end training of machine learning algorithms, as will now be explained.
In a slight overloading of notation, suppose that $x_\theta$ denotes the solution of the linear system defined by $A \equiv A_\theta$ and $b \equiv b_\theta$, where $A_\theta$ and $b_\theta$ depend smoothly on parameters $\theta \in \mathbb{R}^p$.
This situation occurs routinely in kernel methods, for example \citep{williams2006gaussian}.
It can then be useful to compute gradients $\nabla_\theta x_\theta$, but automatic differentiation through linear solvers can be non-trivial, for instance when logical termination criteria are used.
To address this point, we employ calculus to see that
\begin{align}
\nabla_\theta x_\theta = - A_\theta^{-1} (\nabla_\theta A_\theta) A_\theta^{-1} b_\theta + A_\theta^{-1} (\nabla_\theta b_\theta) \label{eq: calculus}
\end{align}
which can be recursively computed with three distinct calls to \texttt{autonugget}.
This calculation is implemented as a custom forward differentiation rule for \texttt{JAX} within \texttt{autonugget}.
This enables automatic selection of appropriate nuggets independently for each of the three linear systems in \eqref{eq: calculus} that must be numerically solved, in contrast to na\"{i}ve differentiation through \texttt{autonugget} for which no such protection would be provided.

\section{Empirical Assessment}
\label{sec: experiments}
This section presents an empirical assessment of \texttt{autonugget}, reporting results across a spectrum of linear systems from well-conditioned to ill-conditioned.
Our baselines and assessment protocol are laid out in \Cref{sec: baselines}.
The accuracy of approximations to both the solution (\Cref{sec: solver acc}) and derivatives of the solution (\Cref{sec: der acc}) are examined, and the practical benefit of \texttt{autonugget} is demonstrated in \Cref{sec: appl}.

\subsection{Baselines}
\label{sec: baselines}

Since our use-case is rapid prototyping, we do \emph{not} compare against numerical methods that are optimised for specific tasks where additional problem structure can be exploited.
Rather, we compare \texttt{autonugget} 
with the following generic baselines (which do not employ extrapolation):
\begin{itemize}
\item \texttt{LU}: The standard \texttt{numpy} solver \eqref{eq: np}, which is based on LU decomposition, with no nugget.
\item \texttt{LU-fix}: As \texttt{LU} but with (arbitrary) fixed nugget at single precision; $\sigma = \epsilon_{\mathrm{sing}} = 10^{-8}$.
\item \texttt{LU-cond}: As \texttt{LU} but with a nugget $\nugget$ chosen just large enough that $\kappa(A + \nugget I) < \epsilon_{\mathrm{sing}}^{-1} = 10^8$.
\item \texttt{LU-adapt}: As \texttt{LU} but with a nugget $\nugget = \nugget_\star$ as in \Cref{sec: select h}. 
\item \texttt{CG}: The conjugate gradient method \citep{hestenes1952methods} with \texttt{scipy} implementation, with default relative tolerance of $10^{-5}$.  
\item \texttt{LSTSQ}: Least squares solution (i.e. pseudo-inverse), implemented using \texttt{numpy}.
\item \texttt{SVD}: Solution using Singular Value Decomposition, implemented using \texttt{numpy}.
\item \texttt{TSVD}: Solution using truncated SVD, where singular values are truncated such that all of them are greater than $10^{-8}$, modified from \texttt{numpy}-SVD.
\end{itemize}

\texttt{JAX} versions of \texttt{LU}, \texttt{LU-fix}, \texttt{CG}, \texttt{LSTSQ}, \texttt{SVD}, \texttt{TSVD} are used for derivate computations and are implemented on CPU. 

As discussed in \Cref{sec: introduction}, Tikhonov regularisation helps address two separate concerns -- regularisation and ill-conditioning. While some of the regularisation-based methods listed above (e.g., \texttt{LU-fix}, \texttt{LU-cond}, \texttt{LU-adapt}, \texttt{LSTSQ}, \texttt{TSVD}) are also helpful for the former purpose, here we strictly test their performance in solving ill-conditioned positive-definite matrices as compared to \texttt{autonugget}.

To automatically generate a set of test problems we consider Gram matrices $A$ associated to a positive definite kernel $k : \mathbb{R} \times \mathbb{R} \rightarrow \mathbb{R}$ and a (fixed) collection of uniformly spaced distinct nodes $\{z_i\}_{i=1}^d \subset \mathbb{R}$, so that $A_{i,j} = k(z_i,z_j)$ for all $i,j \in \{1 , \dots , d\}$.
Specifically, we take $k(z,z') = \exp( -(z-z')^2 / \ell^2)$, unless specified otherwise, since for this kernel changes in the length-scale $\ell$ strongly affect the condition of $A$ (larger $\ell$ is associated with larger $\kappa(A)$).
Again, we emphasise that for any particular set of test problem one can design bespoke numerical methods \citep[indeed, numerical methods for kernel matrices are well-studied, see Section 1.2 of][for a recent review]{schafer2021compression}; our interest is specifically in generic methods which are agnostic to any special structure present in $A$.

\subsection{Solver Accuracy}
\label{sec: solver acc}

To test solver accuracy we fix the solution vector $x$ to be the vector $1$ whose entries are each $1$, and set $b = A x$. 
Given an approximate solution $\hat{x}$ produced by a numerical method, the error is quantified as $\|\hat{x} - x\|_2$.

\begin{figure*}[t!]
\includegraphics[width = \textwidth]{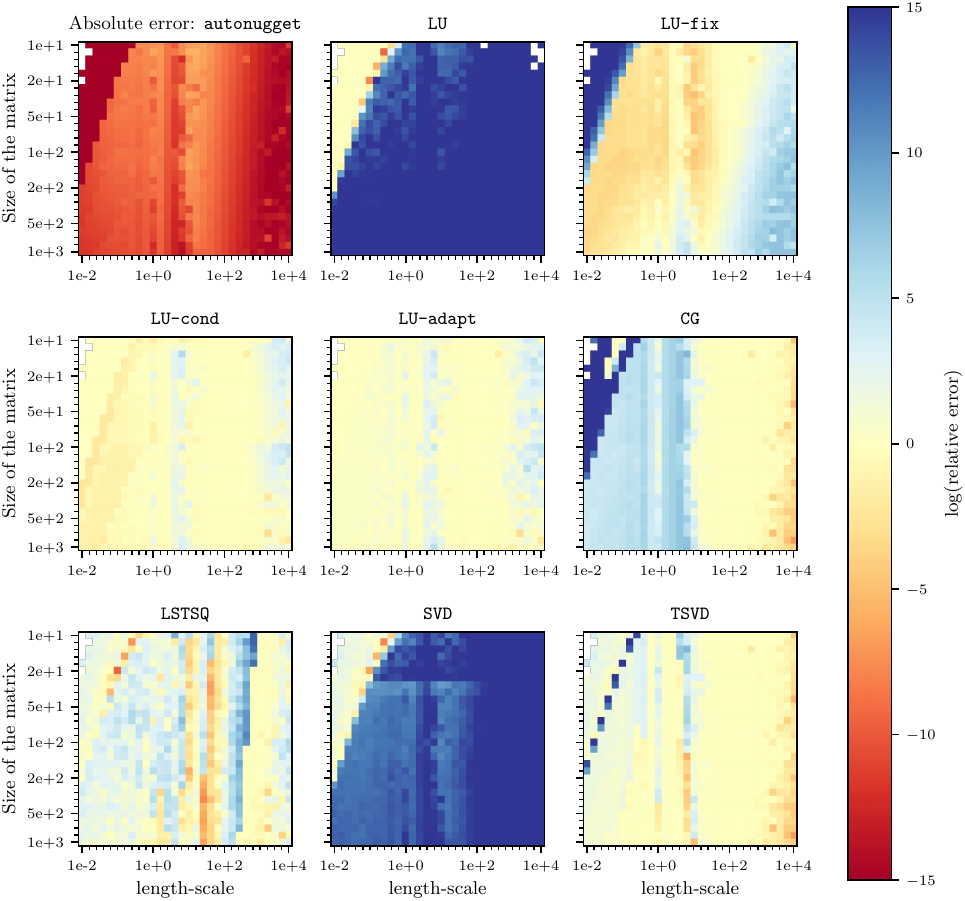}
\caption{Solver accuracy assessment.  
Here we compared \texttt{autonugget} to our baselines, varying the dimension $d$ of the matrix and the length-scale $\ell$ of the kernel.
The logarithm of the error relative to \texttt{autonugget} is reported; blue indicates that \texttt{autonugget} out-performed the baseline method.
[Alternative versions of this figure are included in the Supplement for (a) a different choice of kernel (\cref{fig: error_mtk}), and (b) different polynomial orders for extrapolation (\cref{fig: error_poly}).]} 
\label{fig: error_sqexp}
\end{figure*}

\Cref{fig: error_sqexp} shows log of relative error with respect to \texttt{autonugget}, computed as the ratio of error of the method to the error in \texttt{autonugget}. Absolute errors are presented in the appendix in \cref{fig: error_abs}.
The positive values (in blue) indicate that \texttt{autonugget} out-performed the baseline method. 
It can be seen that \texttt{autonugget} performs far better than \texttt{LU} and \texttt{SVD} for most of the linear systems considered, especially the ill-conditioned systems (i.e. large length-scale $\ell$). 
\texttt{autonugget} also out-performed \texttt{CG} (with the default tolerance used), \texttt{LSTSQ}, \texttt{TSVD} albeit to a lesser extent. However, we note that the absolute error of \texttt{autonugget} is still low even for the cases where it is slightly outperformed by one of the other methods.
More importantly, we observe that \texttt{LU-fix} performed worse than \texttt{autonugget} on both well-conditioned systems (where the nugget is too large, introducing a bias) and ill-conditioned systems (where the nugget is too small, meaning that insufficient regularisation is provided); this clearly illustrates the need for \emph{adaptive} selection of a nugget.
However, the \texttt{LU-adapt} baseline was also out-performed by \texttt{autonugget}; this demonstrates the additional accuracy that comes from the extrapolation functionality of \texttt{autonugget}. 

For the experiments we report in the main text we employed extrapolation based on $n = 2$ linear solves (i.e. linear extrapolation) as a default within \texttt{autonugget}. 
As ablation studies, this experiment was repeated for (a) a different choice of kernel (\cref{fig: error_mtk}), (b) set of non-kernel type matrices (\cref{fig: error_nk}) and (c) different numbers of data $n$ for extrapolation (\cref{fig: error_poly}), with results contained in the Supplement.
Improved accuracy was observed in some cases with $n > 2$, but we found the benefit was not substantial enough to merit the additional computational cost as a default. To show randomised eigenvalue approximation did not affect conclusions, we show \cref{fig: error_sqexp} computed with a different seed in \cref{fig: error_sqexp2}. A table of mean and standard deviation of mean absolute errors (across different seeds, with randomly spaced $\{z_i\}$) is listed in \cref{tab: errors_quarter} and \cref{tab: errors_full}.

\subsection{Derivative Accuracy}
\label{sec: der acc}

For the second set of experiments we sought to evaluate the accuracy with which the derivative of the solution of the linear system was computed.
For this purpose we consider a parametrised linear system $A \equiv A_\theta$, $b \equiv b_\theta$, where to enable an analytically tractable gold-standard we employed a simple dependence on the parameter $\theta$ as $A  = \theta \tilde{A}$ and $b  = \tilde{b}$ for a fixed $\tilde{A}$ and $\tilde{b}$. 
As such, the true derivative is $\nabla_\theta x_\theta = -\frac{x}{\theta}$.
Given a numerical approximation $\nabla \hat{x}_\theta$ to the derivative $\nabla{x}_\theta$, the error was quantified using $\|\nabla{\hat{x}}_\theta - \nabla{x}_\theta\|_2/d$, where we have normalised for the dimension $d$ to make comparison more straightforward.

\begin{figure*}[t!]
\centering
\includegraphics[width = \textwidth]{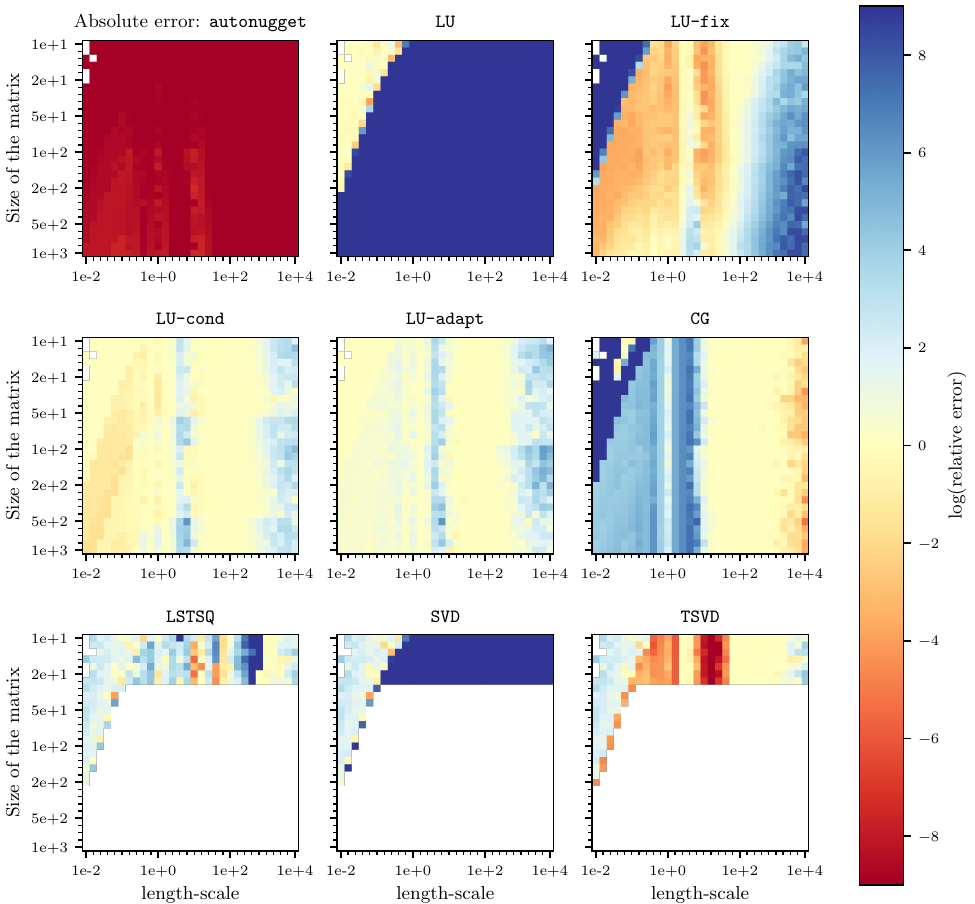}
\caption{Derivative accuracy assessment. 
Here we compare \texttt{autonugget} to the derivative computed using \texttt{JAX} applied to the baselines \texttt{LU} and \texttt{LU-fix},  varying the dimension $d$ of the matrix and the length-scale $\ell$ of the kernel.
The logarithm of the (dimension normalised) error relative to \texttt{autonugget} is reported; blue indicates that \texttt{autonugget} out-performed the baseline method.
}
\label{fig: error_der}
\end{figure*}

\Cref{fig: error_der} compares the accuracy of derivatives obtained from \texttt{autonugget} to automatic differentiation through other baselines using \texttt{JAX}. 
It can be seen that \texttt{LU} performed worst; we attribute this to the absence of any appropriate regularisation during the three linear system solves that are required when automatic differentiation is performed. 
\texttt{LU-cond} and \texttt{LU-adapt} performed much better than \texttt{LU}, but were still out-performed by \texttt{autonugget} when the length-scale is large and the linear-system is severely ill-conditioned. \texttt{autonugget} also performed better than the other baselines, as similar to the solvers in the previous section, with the difference being more pronounced due to multiple linear solves involved in the derivative calculation. Interestingly, \texttt{LSTSQ}, \texttt{SVD} and \texttt{TSVD} failed to reach finite values for ill-conditioned systems, rendering them impractical for multiple systems. This highlights the advantage of using our custom auto differentiation provided by \texttt{autonugget} that enables efficient derivate computation involving multiple separate \texttt{autonugget} calls.

\subsection{Application to GP Regression}
\label{sec: appl}

Finally, we test \texttt{autonugget} in a quasi-realistic setting where rapid prototyping might be required; a hyperparameter optimisation problem for a Gaussian process (GP) regression task. 
As both a model and a data-generating process we took a centred GP with the squared exponential kernel and generated (noiseless) data on a regular grid of $5 \times 5$ points in $[0,1]^2$.
For data generation we used a kernel with length-scale $\ell = 50$.
The task is to infer a suitable length-scale $\ell$ from the dataset by using leave one out cross-validation based on the predictive root-mean squared error. 
Since the true length-scale is much larger than the domain on which data were generated, such optimisation requires working with kernel matrices which, despite being small, are severely ill-conditioned \citep{lin2024improving}.
For optimisation we consider a gradient-descent method with adaptive step-sizes computed from backtracking line-search \citep{nocedal_numerical_2006}. 
Taking the gradient of the cross-validation loss requires differentiating the solution of a linear system involving the kernel matrix; as in \Cref{sec: der acc} we compare \texttt{autonugget} to automatic differentiation through either \texttt{LU} or \texttt{LU-fix} using \texttt{JAX}. 

\begin{figure*}[t!]
\centering
\includegraphics[width = \textwidth]{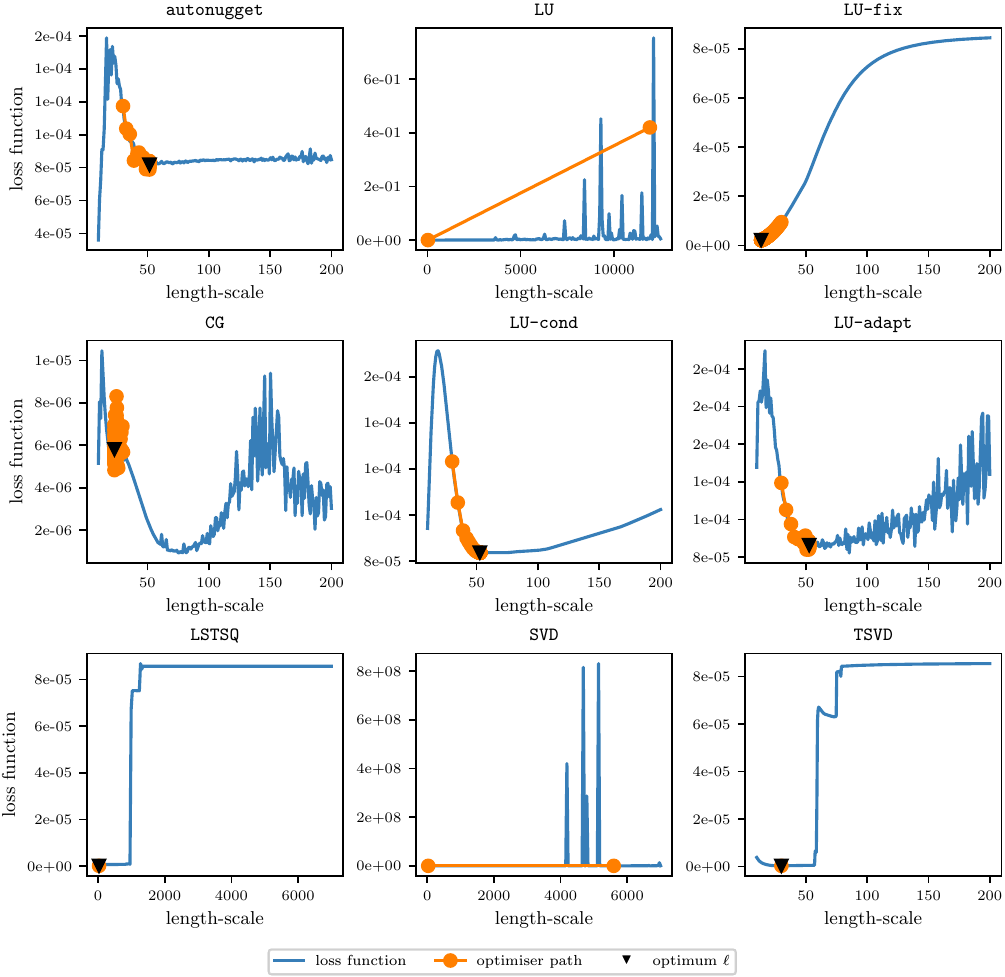}
\caption{Application to a Gaussian process regression task. 
A gradient-based optimiser was applied to the cross-validation loss function described in \Cref{sec: appl}, with evaluations of the loss function and its gradients performed using either \texttt{autonugget} or one of the baselines described in \Cref{sec: baselines}.}
\label{fig: rmse_opt}
\end{figure*}

\Cref{fig: rmse_opt} shows a clear advantage to using \texttt{autonugget} in this setting. 
Indeed, the loss function computed using \texttt{autonugget} has a clear minimum around the true length-scale $\ell = 50$, and the accurate gradient calculation ensures rapid convergence of the optimisation method.
In contrast, the loss functions computed using \texttt{LU} and \texttt{SVD} are non-smooth (due to the lack of any Tikhonov regularisation) and the optimiser diverges to a pathologically large value of $\ell$. \texttt{LSTSQ} and \texttt{TSVD} also lead non-smooth loss functions, and the optimiser barely moves and converges to the initial point.
\texttt{LU-fix} (i.e. the use of a fixed-size nugget across all length-scales, a common approach in rapid prototyping) confers stable optimisation, but the optimiser converges to a value for the length-scale that is far too small. 
This occurs due to the bias introduced by the nugget, which encourages the data to be (incorrectly) explained as ``noise'' instead of signal. 
While \texttt{LU-cond} and \texttt{LU-adapt} managed to converge to appropriate length-scales in this experiment, it can be seen that their computed cross-validation loss increases as the length-scale is increased; this is incorrect, because the cross validation loss should become constant as $\ell \rightarrow \infty$.
This increase is an artefact of an increasing amount of regularisation being applied in \texttt{LU-cond} and \texttt{LU-adapt} as the increasing length-scale $\ell$ increases the condition number of the associated linear systems.
In contrast, the constant tail behaviour of the cross-validation loss is correctly captured by \texttt{autonugget}.

\section{Discussion}
Numerical linear algebra is critical to accelerating computations in machine learning, for instance in kernel methods \citep{rudi2017falkon}, GPs \citep{chen2025sparse}, and deep learning \citep{sato2024scaled}.
Yet sophisticated numerical methods are not the first port-of-call during methodological development.
Motivated by the lack of a tuning-free, automatic differentiation-compatible routine to solve ill-conditioned linear systems of equations, we introduced \texttt{autonugget}.
A distinguishing feature of \texttt{autonugget} is its use of extrapolation to gain accuracy beyond the critical value of $\nugget$ at which explosive behaviour is encountered in solution using a direct method.
An empirical assessment supported the use of \texttt{autonugget} as a general-purpose tool for solving linear systems of equations specified by symmetric positive definite matrices, which are widely encountered in machine learning. 

\texttt{autonugget} is currently limited to symmetric positive definite matrices, and a future direction would be extend our theoretical framework to non-SPD matrices. A particularly interesting case would be the behaviour of extrapolated solution as $\nugget \rightarrow 0$ in the cases where a unique solution does not exist, e.g., when $A$ is symmetric positive semidefinite. While \texttt{autonugget} performs better than standard methods for ill-conditioned methods, it also incurs higher computational cost. A more efficient low-level implementation of \texttt{autonugget} can potentially address this concern. Additionally, as a low-cost alternative, we plan explore an SVD-centric approach that would use a single SVD computation with extrapolation in the span of SVD-basis, and the corresponding error analysis. 
The use cases of \texttt{autonugget} are limited to matrices that can be stored in memory, but for prototyping applications this is often sufficient.
Another possible direction for future work is to incorporate probabilistic error bars for the extrapolated solution, for instance by replacing polynomial extrapolation with a statistical regression model \citep{oates2025probabilistic}.


\bibliographystyle{plainnat}
\bibliography{references.bib}

\newpage
\appendix

\section*{Appendices}
\thispagestyle{empty}

\Cref{app: proofs} contains proofs for all theoretical results appearing in the main text.
\Cref{app: documents} contains documentation for \texttt{autonugget}.
\Cref{app: extra} reports additional experimental results, along with full details for the experiment that were performed.

\section{Proofs}
\label{app: proofs}
\Cref{app: preliminary} contains preliminary results which will be required to prove \Cref{cor: extrap error} and \Cref{thm: num er}.
\Cref{cor: extrap error} is proven in \Cref{app: proof extrap error}, while \Cref{thm: num er} is proven in \Cref{app: numerical proof}.

\subsection{Preliminary Results}
\label{app: preliminary}

For convenience we introduce the shorthand $\|f\|_{\infty,\nugget_{\max}} := \|f\|_{\infty,[0,\nugget_{\max}]} = \sup_{\nugget \in [0,\nugget_{\max}]} |f(\nugget)|$ on $C^0([0,\nugget_{\max}],\mathbb{R})$.
Further, following the same notation used in \Cref{subsec: select lambda} of the main text, let
\begin{align*}
\Pi_\Sigma : C^0([0,\nugget_{\max}],\mathbb{R}) & \rightarrow \mathbb{R} \\
f & \mapsto f_n(\mathbf{0})
\end{align*}
where $f_n$ denotes the interpolating polynomial (in $\mathcal{P}$) of $f$ on $\Sigma$.

\begin{proposition}[Polynomial extrapolation error]
\label{thm: leb}
Let $f \in C^0([0,\nugget_{\max}],\mathbb{R})$.
Let $\Sigma$ be $\mathcal{P}$-unisolvent and let $f_n \in \mathcal{P}^d$ interpolate $f$ on $\Sigma$.
Then 
\begin{align*}
| f(0) - f_n(0) | \leq (1 + \lambda(0 ; \Sigma)) \inf_{p \in \mathcal{P}} \| f - p \|_{\infty,\nugget_{\max}} .
\end{align*}
\end{proposition}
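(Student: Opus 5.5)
The plan is the standard Lebesgue-constant argument: the extrapolation map reproduces polynomials in $\mathcal{P}$, and it is a bounded linear functional on $C^0([0,\nugget_{\max}],\mathbb{R})$.

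First, write the interpolant in Lagrange form. Since $\Sigma$ is $\mathcal{P}$-unisolvent, the Lagrange polynomials $\ell_i(\cdot;\Sigma)$ exist. They satisfy
\begin{align*}
f_n(\nugget) = \sum_{i=1}^n f(\nugget_i)\, \ell_i(\nugget;\Sigma) .
\end{align*}
Hence $\Pi_\Sigma(f) = \sum_{i=1}^n f(\nugget_i)\,\ell_i(0;\Sigma)$, which is linear in $f$. Because $\Sigma \subset [0,\nugget_{\max}]$, this gives
\begin{align*}
|\Pi_\Sigma(g)| \leq \lambda(0;\Sigma)\, \|g\|_{\infty,\nugget_{\max}}
\end{align*}
for any $g \in C^0([0,\nugget_{\max}],\mathbb{R})$. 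Equivalently, $\|\Pi_\Sigma\|_{\mathrm{op}} \leq \lambda(0;\Sigma)$ in the sense of \eqref{eq: op norm}.

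Second, observe that $\Pi_\Sigma(p) = p(0)$ for every $p \in \mathcal{P}$. This holds because the interpolant of $p$ in $\mathcal{P}$ is $p$ itself, by uniqueness under unisolvence.

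Third, fix an arbitrary $p \in \mathcal{P}$ and split the error as
\begin{align*}
f(0) - f_n(0) = \big(f(0) - p(0)\big) + \Pi_\Sigma(p - f) .
\end{align*}
The first term is at most $\|f-p\|_{\infty,\nugget_{\max}}$, since $0 \in [0,\nugget_{\max}]$. The second term is at most $\lambda(0;\Sigma)\,\|f-p\|_{\infty,\nugget_{\max}}$ by the bound from the first step. Adding these and taking the infimum over $p \in \mathcal{P}$ yields the claim.

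No step is a real obstacle. The only points needing care are that $0$ and every $\nugget_i$ lie in $[0,\nugget_{\max}]$, so that sup-norm bounds over that interval apply, and that the statement's $f_n \in \mathcal{P}^d$ is read coordinatewise with $d=1$ here.
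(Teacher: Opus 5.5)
Your proposal is correct and follows essentially the same route as the paper: triangle inequality through a comparison polynomial, polynomial reproduction $\Pi_\Sigma(p)=p(0)$, and the bound $\|\Pi_\Sigma\|_{\mathrm{op}}\le\lambda(0;\Sigma)$ from the Lagrange representation. The only difference is that you take an arbitrary $p\in\mathcal{P}$ and pass to the infimum at the end, whereas the paper assumes a minimiser $p_\star$ exists; that assumption holds by finite-dimensionality, but your version avoids needing it.
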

\begin{proof}[Proof of \Cref{thm: leb}]
Let $p_\star$ attain the infimum.
Then by the triangle inequality, and the fact that $\Pi_\Sigma$ is a projection,
\begin{align*}
| f(0) - f_n(0) | & \leq | f(0) - p_\star(0) | + | p_\star(0) - f_n(0) |\\
& =  | f(0) - p_\star(0) | + | \Pi_\Sigma( p_\star ) - \Pi_\Sigma ( f ) | \\
& =  | f(0) - p_\star(0) | + | \Pi_\Sigma( p_\star - f ) | \\
& \leq  \| p_\star - f \|_{\infty , \nugget_{\max}} + \|\Pi_\Sigma\|_{\mathrm{op}}  \| p_\star - f \|_{\infty ,\nugget_{\max}}
= (1 + \|\Pi_\Sigma\|_{\mathrm{op}} ) \| p_\star - f \|_{\infty , \nugget_{\max}}
\end{align*}
and the result follows from 
\begin{align}
\|\Pi_\Sigma\|_{\mathrm{op}} = \sup_{g \neq 0} \frac{|\Pi_\Sigma(g)|}{\|g\|_{\infty,\nugget_{\max}}} 
= \sup_{g \neq 0} \frac{| \sum_{i=1}^n g(\sigma_i) \ell_i(0;\Sigma) |}{\|g\|_{\infty,\nugget_{\max}}}    \label{eq: op proj norm}
\end{align}
since the right hand side of \eqref{eq: op proj norm} is at most $\lambda(0 ; \Sigma)$.
\end{proof}

The following result establishes convergence acceleration for Richardson extrapolation:

\begin{proposition}[Convergence acceleration in general]
\label{thm: conv acc}
Let $f \in C^0([0,\nugget_{\max}],\mathbb{R})$ and assume that there exist coefficients $\beta_i$ for which the residual
\begin{align}
R(\sigma) = f(\sigma) - \sum_{ i = 1 }^m \beta_i p_i( \sigma )  \label{eq: error}
\end{align}
vanishes as $\sigma \rightarrow 0$.
Let $\Sigma_{\mathrm{ref}}$ be $\mathcal{P}$-unisolvent.
Let $\Sigma_h = \{ h \sigma_i\}_{i=1}^n$ for $h \in (0,1]$. 
Let $f_n^h \in \mathcal{P}^d$ interpolate $f$ on $\Sigma_h$.
Then 
\begin{align*}
\underbrace{ |  f_n^h(0) - f(0) | }_{\text{\normalfont extrapolation error}} \leq \underbrace{ ( 1 + \lambda(0; \Sigma_{\mathrm{ref}}) ) }_{\text{\normalfont constant in $h$}} \; \; \| R \|_{\infty, h \nugget_{\max}} .
\end{align*}
\end{proposition}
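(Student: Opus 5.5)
The plan is to apply \Cref{thm: leb} on the rescaled interval $[0,h\nugget_{\max}]$ rather than $[0,\nugget_{\max}]$, and to reduce the Lebesgue function of $\Sigma_h$ to that of $\Sigma_{\mathrm{ref}}$ by scale invariance.

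\textbf{Step 1: work on the shrunken interval.} Note that $\Sigma_h \subset [0,h\nugget_{\max}]$, and that $f_n^h$ interpolates the restriction of $f$ to $[0,h\nugget_{\max}]$. Running the argument of \Cref{thm: leb} with $\nugget_{\max}$ replaced by $h\nugget_{\max}$ throughout gives
\begin{align*}
|f(0) - f_n^h(0)| \leq (1 + \lambda(0;\Sigma_h)) \inf_{p\in\mathcal{P}} \|f-p\|_{\infty, h\nugget_{\max}} .
\end{align*}
The proof of \Cref{thm: leb} uses only the projection property and the representation $\Pi_{\Sigma}(g)=\sum_i g(\sigma_i)\ell_i(0;\Sigma)$, so the interval length plays no role. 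This is the one point needing care: the sup norm must be taken on $[0,h\nugget_{\max}]$, otherwise we would only recover $\|R\|_{\infty,\nugget_{\max}}$, with no $h$-dependence. Choosing $p=\sum_i\beta_i p_i \in \mathcal{P}$ then bounds the infimum by $\|R\|_{\infty,h\nugget_{\max}}$.

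\textbf{Step 2: scale invariance of the Lebesgue function at $0$.} Show that $\lambda(0;\Sigma_h)=\lambda(0;\Sigma_{\mathrm{ref}})$. I would argue this for the polynomial space $\mathcal{P}=\mathrm{span}\{1,\nugget,\dots,\nugget^{n-1}\}$ used in the application, or more generally for any $\mathcal{P}$ closed under dilation $p(\cdot)\mapsto p(\cdot/h)$. Under that hypothesis, $\ell_i(\nugget/h;\Sigma_{\mathrm{ref}})$ lies in $\mathcal{P}$ and takes the value $\delta_{ij}$ at $h\nugget_j$. By uniqueness of interpolation on the unisolvent set $\Sigma_h$ (unisolvency is preserved by dilation in this case), it follows that $\ell_i(\nugget;\Sigma_h)=\ell_i(\nugget/h;\Sigma_{\mathrm{ref}})$. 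Evaluating at $\nugget=0$ gives $\ell_i(0;\Sigma_h)=\ell_i(0;\Sigma_{\mathrm{ref}})$, and summing absolute values gives the claim.

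\textbf{Main obstacle.} The hard step is Step 2 for a general $\mathcal{P}$. The statement leaves $\mathcal{P}$ general, yet dilation invariance is needed; for a non-monomial-type span it could fail. I would therefore make explicit that $\mathcal{P}$ is dilation-invariant, which holds for spans of monomials and in particular in the setting of \Cref{cor: extrap error}. Combining Steps 1 and 2 yields the bound stated in the proposition.
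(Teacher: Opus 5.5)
Your proposal follows the paper's proof essentially verbatim: apply \Cref{thm: leb} with the sup norm taken over $[0,h\nugget_{\max}]$, replace $\lambda(0;\Sigma_h)$ by $\lambda(0;\Sigma_{\mathrm{ref}})$, and bound the infimum using $p=\sum_i \beta_i p_i$. The paper only asserts the scale invariance of $\lambda(0;\cdot)$ and the preservation of unisolvency; your dilation argument supplies the missing justification, and your extra hypothesis on $\mathcal{P}$ is genuinely necessary. For example, with $\mathcal{P}=\mathrm{span}\{1,\nugget^2-\nugget\}$ and $\Sigma_{\mathrm{ref}}=\{1/2,2\}$ one has $\lambda(0;\Sigma_{\mathrm{ref}})=1$ but $\lambda(0;\Sigma_h)\to\infty$ as $h\to 2/5$ (where $\Sigma_h$ ceases to be unisolvent), so taking $f=R$ with $R(0)=0$, $|R|\le 1$ and $R(h\nugget_i)=\mathrm{sign}\,\ell_i(0;\Sigma_h)$ violates the stated bound for general $\mathcal{P}$.
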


\noindent In other words, if $f$ admits a Taylor expansion at $\sigma = 0$ and $\mathcal{P}$ contains more than just the leading constant term in this Taylor expansion, then $f_n^h(0)$ converges faster than the original $f(\sigma)$, for any $\sigma \in \Sigma_h$, as $h \rightarrow 0$.
\begin{proof}[Proof of \Cref{thm: conv acc}]
Note that $f_n^h$ exists and is unique by the assumption that $\Sigma_{\mathrm{ref}}$ is $\mathcal{P}$-unisolvent, which implies also that $\Sigma_h$ is $\mathcal{P}$-unisolvent.

From \Cref{thm: leb},
\begin{align*}
| f(0) - f_n^h(0) | \leq (1 + \lambda(0; \Sigma_h)) \inf_{p \in \mathcal{P}} \| f - p \|_{\infty , h\nugget_{\max} } .
\end{align*}
Since $\lambda(0; \Sigma_h) = \lambda (0; \Sigma_{\mathrm{ref}})$ for all $h > 0$, we have that
\begin{align*}
| f(0) - f_n^h(0) | \leq (1 + \lambda(0; \Sigma_{\mathrm{ref}})) \inf_{p \in \mathcal{P}} \| f - p \|_{\infty , h\nugget_{\max} } .
\end{align*}
The first term is constant in $h$.
For the second term,
\begin{align*}
\inf_{p \in \mathcal{P}} \| f - p \|_{\infty , h\nugget_{\max} }  & \leq \left\| f - \sum_{i=1}^m \beta_i p_i \right\|_{\infty,h \nugget_{\max} } = \| R \|_{\infty,h \nugget_{\max} } ,
\end{align*}
completing the argument.
\end{proof}

\subsection{Proof of \Cref{cor: extrap error}}
\label{app: proof extrap error}

\begin{proof}[Proof of \Cref{cor: extrap error}]
Let $f_i$ denote the $i$th coordinate of $f$.
Since $f_i \in C^\infty([0,\sigma_{\max}],\mathbb{R})$, from the mean value theorem the Taylor expansion of $f_i$ at $\sigma = 0$ with residual
$$
R_i(\sigma) = f_i(\sigma) - \sum_{ i = 1 }^n \beta_i \sigma^{i-1} 
$$
satisfies 
$$
|R_i(\sigma)| \leq \frac{\sigma^{n}}{n!} \sup_{\tilde{\sigma} \in [0,\sigma]} | f_i^{(n)}(\tilde{\sigma}) |  .
$$
From calculus
\begin{align*}
    f_i^{(n)}(\sigma) = (-1)^n (n!) [ (A + \sigma I)^{-(n+1)} b ]_i ,
\end{align*}
so, using a spectral bound,
\begin{align*}
| R_i(\sigma) | & \leq \sigma^{n} \sup_{ \tilde{\sigma} \in [0,\sigma]} |  [ (A + \tilde{\sigma} I)^{-(n+1)} b ]_i | \\
& \leq \sigma^n \sup_{ \tilde{\sigma} \in [0,\sigma]} \lambda_{\min}(A + \tilde{\sigma} I)^{-(n+1)}  \|b\|_2 \\
\implies \| R(\sigma) \|_\infty & \leq \sigma^n \sup_{ \tilde{\sigma} \in [0,\sigma]} \lambda_{\min}(A + \tilde{\sigma} I)^{-(n+1)} \|b\|_2 .
\end{align*}
Since $A \prec A + \sigma I$, we have $\lambda_{\min}(A) \leq \lambda_{\min}(A + \sigma I)$, and
\begin{align*}
\| R(\sigma) \|_\infty & \leq \sigma^n \lambda_{\min}(A)^{-(n+1)} \|b\|_2 .
\end{align*}
Finally,
\begin{align*}
\sup_{\sigma \in [0,h\sigma_{\max}]} \|R(\sigma)\|_\infty & \leq (h \sigma_{\max})^n \lambda_{\min}(A)^{-(n+1)} \|b\|_2  .
\end{align*}
From \Cref{thm: conv acc} we have the result.
\end{proof}

\subsection{Proof of \Cref{thm: num er}}
\label{app: numerical proof}

\begin{proof}[Proof of \Cref{thm: num er}]
From the definition of the operator norm in \eqref{eq: op norm},
\begin{align}
\| \Pi_{\Sigma_h}[g] - \Pi_{\Sigma_h}[\hat{g}] \|_\infty  \leq \| \Pi_{\Sigma_h} \|_{\mathrm{op}} \|g - \hat{g}\|_{\infty}  \label{eq: op norm bd}
\end{align}
for all $g , \hat{g} \in C^0([0,\nugget_{\max}],\mathbb{R})$.
Since $\Pi_{\nuggets_h}[f]$ and $\Pi_{\nuggets_h}[\hat{f}]$ depend on $f$ and $\hat{f}$ only at the inputs $\nuggets_h$, we can take an infimum on both sides of \eqref{eq: op norm bd} over all continuous $g$ (resp. $\hat{g}$) that agree with $f(\nugget)$ (resp. $\hat{f}(\nugget)$) for all $\nugget \in \nuggets_h$, to obtain
\begin{align*}
\| \Pi_{\Sigma_h}[f] - \Pi_{\Sigma_h}[\hat{f}] \|_\infty  \leq \| \Pi_{\Sigma_h} \|_{\mathrm{op}} \|f - \hat{f}\|_{\infty, \Sigma_h} .
\end{align*}
Recall that $\nugget_{\max} := \max\{\nugget_i\}_{i=1}^n$.
Using the shorthand introduced in \Cref{app: preliminary}, the result follows from the same argument used to establish \eqref{eq: op proj norm}:
\begin{align*}
\|\Pi_{\Sigma_h}\|_{\mathrm{op}} := \sup_{g \neq 0} \frac{|\Pi_{\Sigma_h}(g)|}{\|g\|_{\infty,\nugget_{\max}}} 
= \sup_{g \neq 0} \frac{| \sum_{i=1}^n g(\sigma_i) \ell_i(0;\Sigma_h) |}{\|g\|_{\infty,\nugget_{\max}}}  
\end{align*}
and the fact that the right hand side of \eqref{eq: op proj norm} is at most $\lambda(0 ; \Sigma_h)$ which in turn is equal to $\lambda(0;\Sigma_{\mathrm{ref}})$.
\end{proof}

\section{Documentation for \texttt{autonugget}}
\label{app: documents}
\label{app: package}

The \texttt{autonugget} package can be installed from Github\footnote{\url{https://github.com/hegdedisha/autonugget}}. This mainly implements the function \texttt{autonugget} as described below:

\texttt{autonugget(A, b, m=1, mode = `adapt', extrap = True, JAX\_enabled = False, sigma\_star = None, Sigma\_ref = None)}

This function computes the solution of the system $Ax=b$ using \texttt{autonugget}. This function is compatible with differentiation using \texttt{JAX}.

\textbf{Parameters}

\texttt{A}: array like, shape $(d,d)$ \newline
\hspace*{1cm}  $d \times d$ matrix \\
\texttt{b}: array like, shape $\{(d,), ( d, k)\}$ \\
\hspace*{1cm}    $d$ dimensional vector \\
\texttt{m}: int; optional \\
\hspace*{1cm}      desired degree of the polynomial \\
\texttt{mode}: str; default: \texttt{`adapt'} \\
\hspace*{1cm}      method to choose the nugget; should be one of \texttt{`adapt'} or \texttt{`cond'}\\
\texttt{extrap}: bool; default: \texttt{True}\\
\hspace*{1cm}      use extrapolation to calculate the approximate solution\\
\texttt{JAX\_enabled}: bool; default: \texttt{False}\\
\hspace*{1cm}      switch to \texttt{True} to use custom forward differentiation from \texttt{JAX}\\
\texttt{sigma\_star}: float; optional\\
\hspace*{1cm}      value of the nugget\\
\texttt{Sigma\_ref}: list; optional\\
\hspace*{1cm}     reference design to choose \texttt{sigma\_star}

\textbf{Returns}

\texttt{x}: array like, shape  $\{(d,), ( d, k)\}$\\
\hspace*{1cm}      approximate solution to the system $Ax = b$. Returned shape is same as shape of $b$.

\section{Additional Experimental Results}
\label{app: extra}
\subsection{Additional results from \cref{sec: experiments}}

\begin{table} [t!]
\caption{Mean and standard deviation of absolute error across multiple runs (with different seeds and randomly spaced $\{z_i\}$) averaged over the 100 ill-conditioned matrices in the bottom-right quarter for the experiment in \cref{sec: solver acc}. Apart from  \texttt{TSVD}, various versions of \texttt{autonugget}, with and without extrapolation have the best performance compared to other baselines. \texttt{autonugget-cond} (as described in \cref{sec: autonugget_cond}) has the least mean absolute error after \texttt{TSVD}, followed by \texttt{autonugget}.}
\centering
\begin{tabular}{c|c|c} \label{tab: errors_quarter}
& Mean & Standard Deviation \\
\midrule
\texttt{autonugget} &1.80e-04 & 1.71e-05 \\
\texttt{autonugget-cond} & \textbf{1.41e-04} & 2.91e-07 \\
\texttt{LU} & 2.52e+06 & 3.72e+06 \\
\texttt{LU-fixed} & 5.73e-04 & 2.90e-06 \\
\texttt{LU-adapt} & 2.31e-04 & 1.61e-05 \\
\texttt{LU-cond} & 1.95e-04 & 1.23e-06 \\
\texttt{CG} & 4.84e-04 & 9.87e-06 \\
\texttt{LSTSQ} &5.00e-04 & 9.57e-05 \\
\texttt{SVD} & 2.56e+02 & 5.60e+00\\
\texttt{TSVD} & \textbf{7.12e-06} & 6.37e-07 \\
\end{tabular}
\end{table}

\begin{table} [t!]
\caption{Mean and standard deviation of error across multiple runs averaged over all 900 matrices for the experiment in \cref{sec: solver acc}. For this larger set of matrices, extrapolated versions of \texttt{autonugget} -- \texttt{autonugget} and \texttt{autonugget-cond} perform well after \texttt{TSVD}. }
\centering
\begin{tabular}{c|c|c} \label{tab: errors_full}
& Mean & Standard Deviation \\
\midrule
\texttt{autonugget} & 2.41e-04 & 7.68e-06  \\
\texttt{autonugget-cond} & \textbf{1.20e-04} & 9.98e-07 \\
\texttt{LU} & 5.50e+15 & 1.86e+16 \\
\texttt{LU-fixed} & 2.27e-04 & 9.18e-07 \\
\texttt{LU-adapt} & 3.89e-04 & 1.34e-05 \\
\texttt{LU-cond} & 1.82e-04 & 1.18e-06 \\
\texttt{CG} & 2.30e-02 & 4.73e-04 \\
\texttt{LSTSQ} & 2.02e-03 & 2.31e-04 \\
\texttt{SVD} & 3.79e+02 & 2.11e+02 \\
\texttt{TSVD} & \textbf{9.37e-06} & 2.05e-07 \\

\end{tabular}
\end{table}

\begin{figure*}[t!]
\centering
\includegraphics[width = \textwidth]{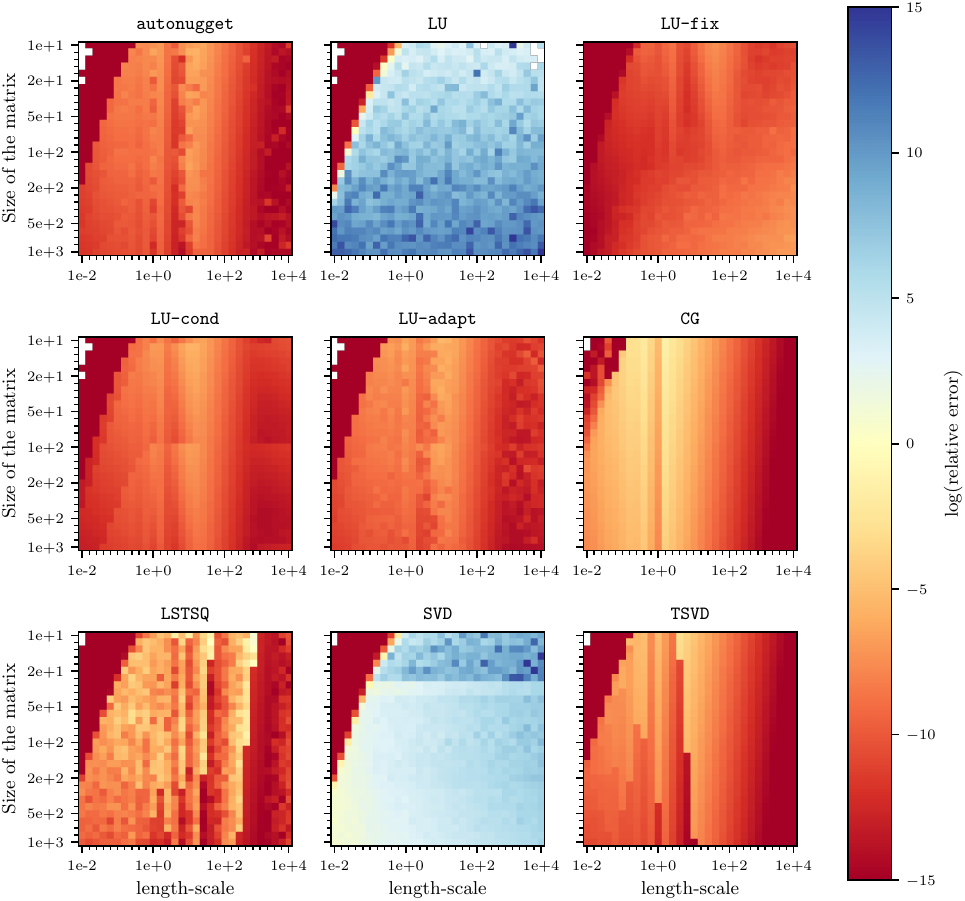}
\caption{Absolute errors of all methods for the experiment in \cref{fig: error_sqexp}.}
\label{fig: error_abs}
\end{figure*}

\begin{figure*}[t!]
\centering
\includegraphics[width = \textwidth]{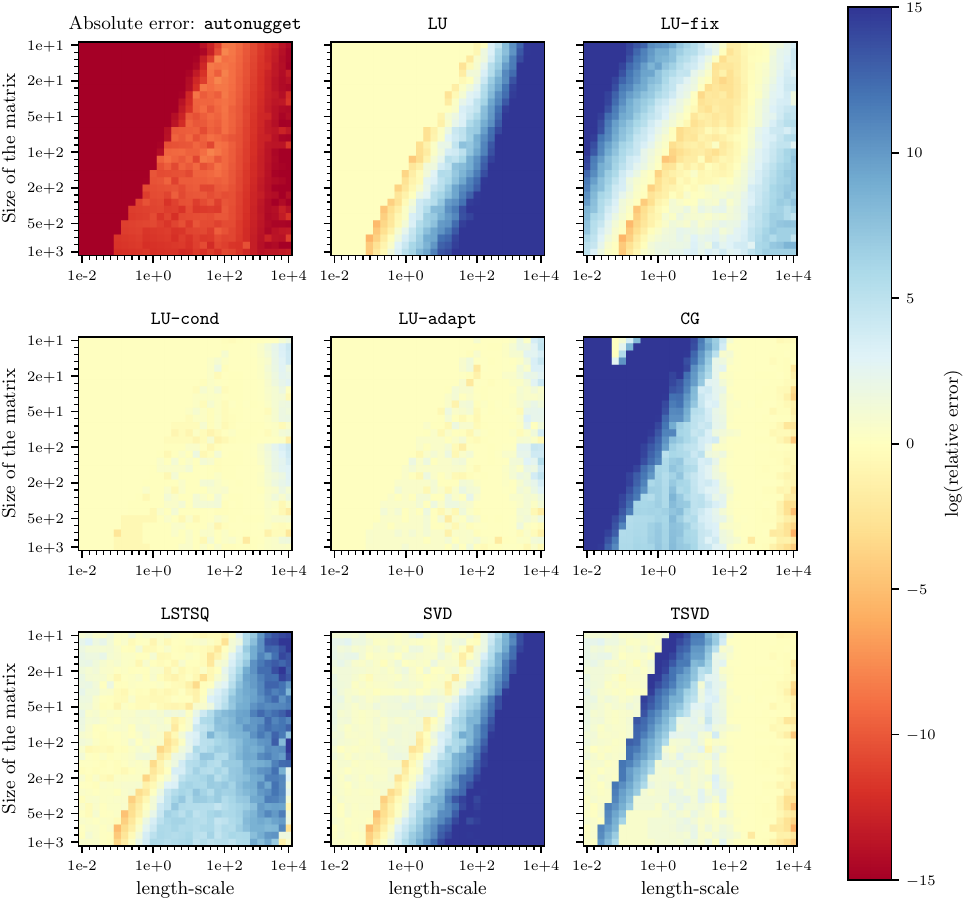}
\caption{Results on the Mat\'ern kernel matrices.  Here we compare \texttt{autonugget} to our baselines, varying the dimension $d$ of the matrix and the length-scale $\ell$ of the kernel.}
\label{fig: error_mtk}
\end{figure*}

\begin{figure*}[t!]
\includegraphics[width = \textwidth]{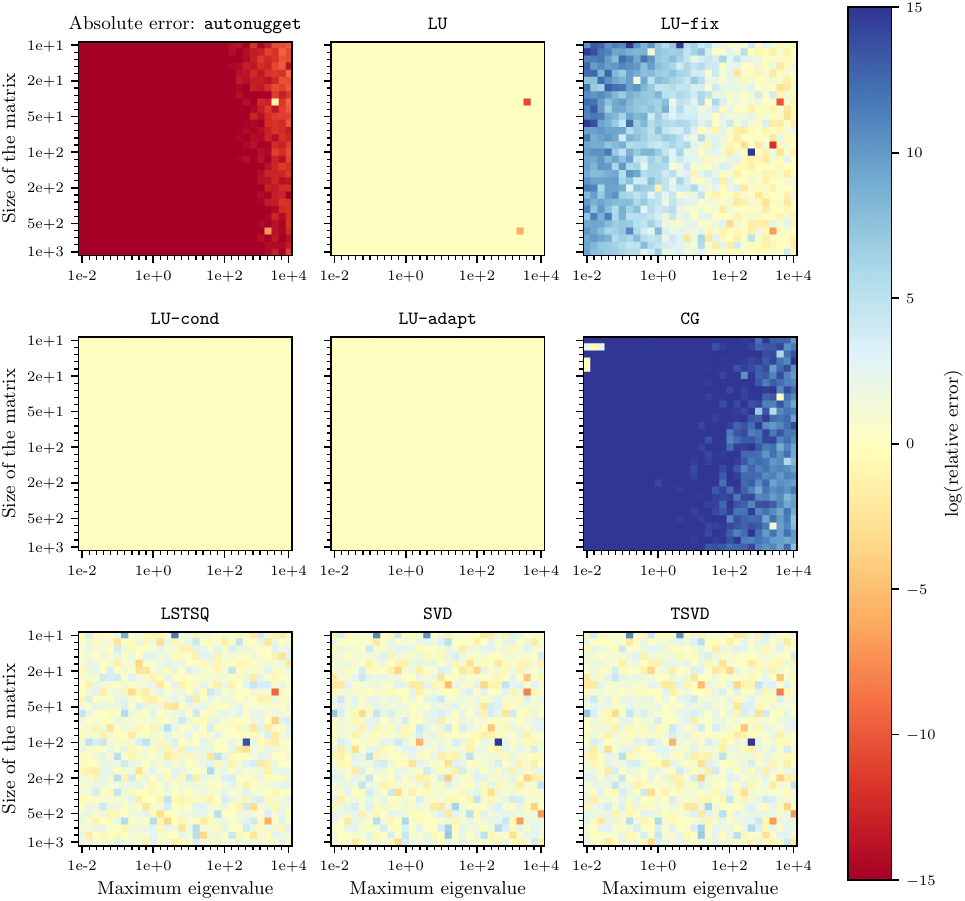}
\caption{Results on non-kernel type matrices.  
Here we compared \texttt{autonugget} to our baselines, for a set of positive definite matrices generated as a product of $UDU^\top$, where $U$ is an orthogonal matrix drawn from a Haar measure, and $D$ is a diagonal matrix of eigenvalues drawn randomly from 
$\mathcal{U}(0.01, \text{maximum eigenvalues})$. This ensures that the matrices are ill-conditioned as the maximum eigenvalue increases. We can see that \texttt{autonugget} performs almost the same as \texttt{LU}, due to the matrices still being well-conditioned, even for an condition number of $\approx 10^{10}$. \texttt{autonugget} performs better than \texttt{LU-fix} for better conditioned matrices, due to the fixed nugget being too large. \texttt{autonugget} performs much better than \texttt{CG} as well, possibly due to the default tolerance being too high for such matrices.} 
\label{fig: error_nk}
\end{figure*}

\begin{figure}
\centering
\includegraphics[width = \textwidth]{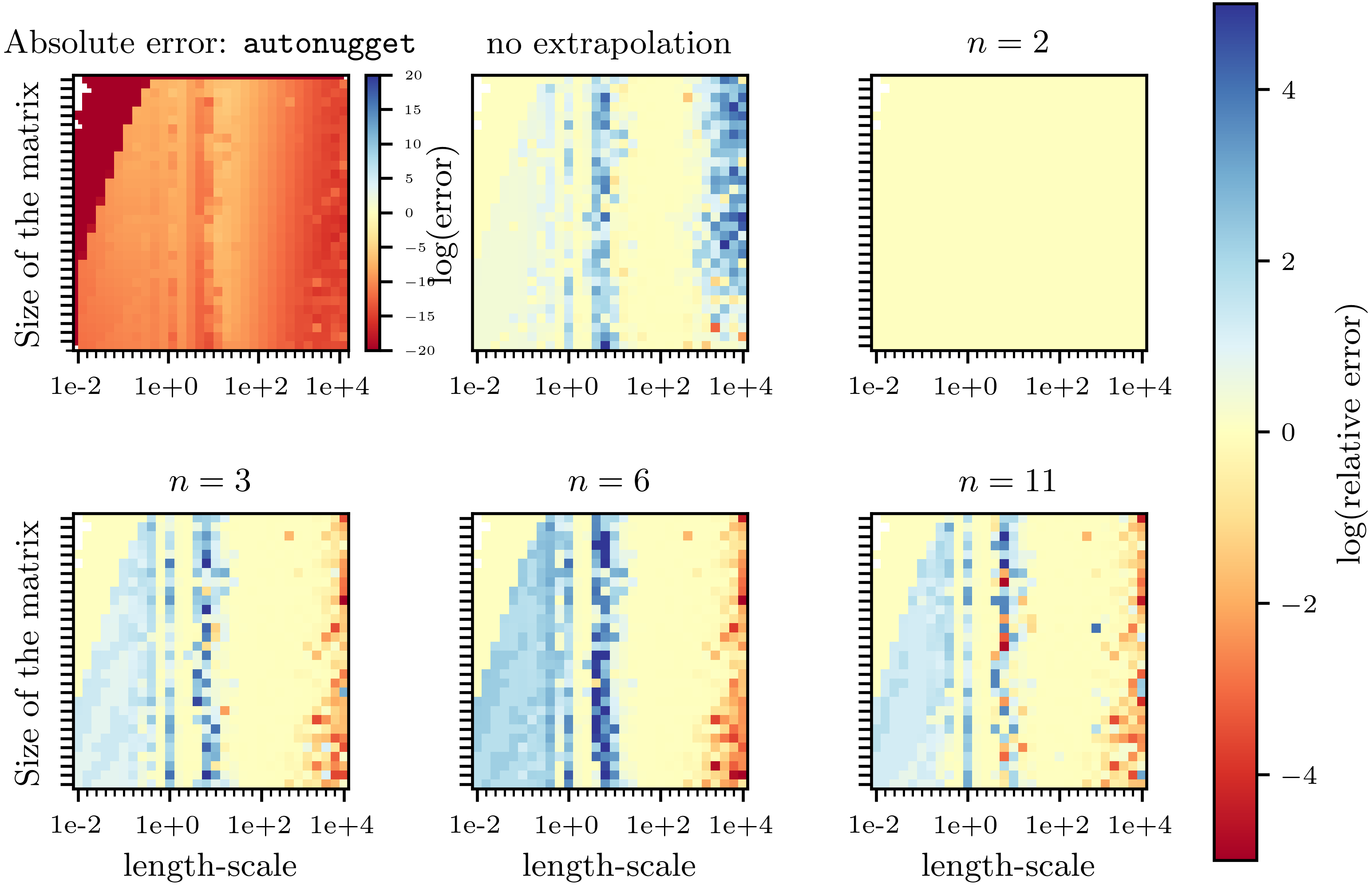}
\caption{Results on the squared exponential kernel matrices, comparing solutions with no extrapolation and extrapolation using different degree polynomials. All the relative errors are with respect to \texttt{autonugget} with the default choice of extrapolation using 2 linear solves. 
}
\label{fig: error_poly}
\end{figure}

\begin{figure}
\includegraphics[width = \textwidth]{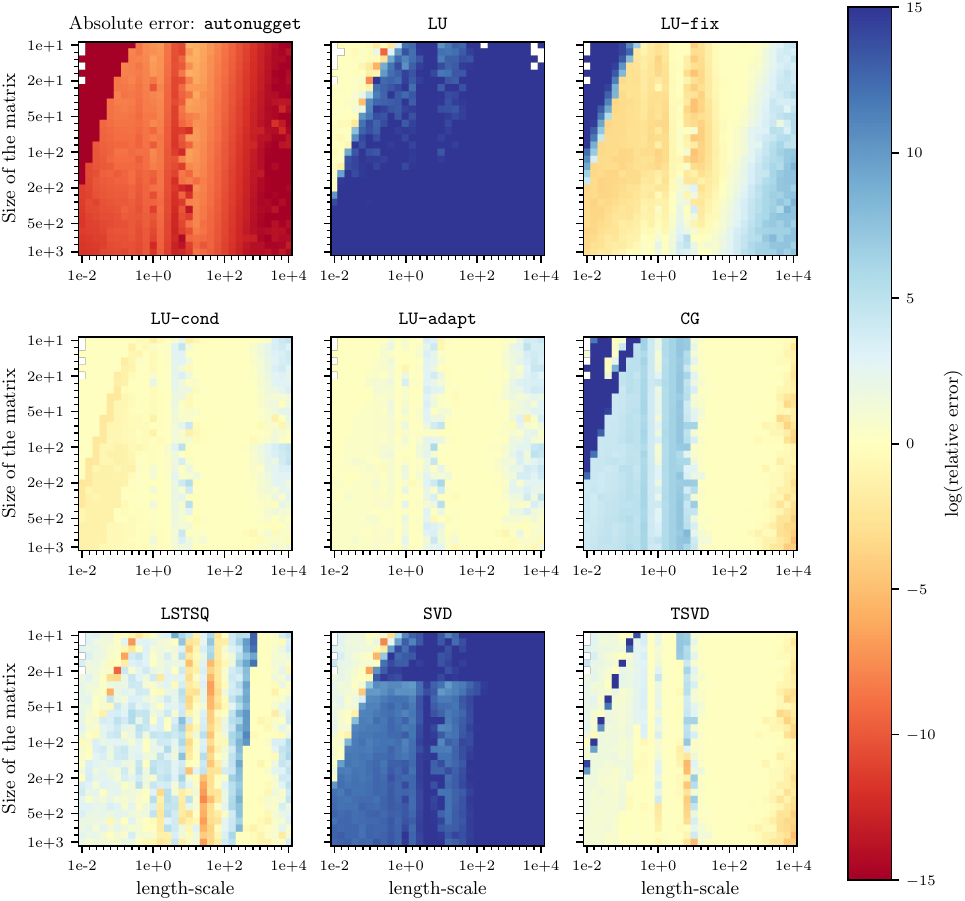}
\caption{\cref{fig: error_sqexp} computed with a different seed. This shows that the results are robust to stochasticity in the minimum eigenvalue computation.} 
\label{fig: error_sqexp2}
\end{figure}

\subsection{\texttt{autonugget-cond}} \label{sec: autonugget_cond}
Along with the strategy explained in \cref{sec: select h}, the \texttt{autonugget} package provides an alternative strategy to choose $\sigma_\star$ - by choosing the smallest $\sigma$ such that the condition number of $(A + \sigma I)$ does not exceed $10^{8}$ --- as a computationally cheaper alternative which only involves the calculation of condition numbers and avoids the stochastic minimum eigenvalue computations. \cref{fig: errors_cond,fig: rmse_opt_cond} reproduce the results of \cref{fig: error_sqexp,fig: error_mtk,fig: error_poly,fig: error_der,fig: rmse_opt} using \texttt{autonugget-cond} in place of \texttt{autonugget}.

\cref{fig: errors_cond} suggests that \texttt{autonugget-cond} performs slightly better than \texttt{autonugget} for better conditioned linear systems, but \texttt{autonugget} outperforms \texttt{autonugget-cond} in highly ill-conditioned matrices, leading us to favour \texttt{autonugget}. \cref{fig: rmse_opt_cond} shows \texttt{autonugget-cond} results in a smoother loss function, but both the methods converge to very close optimum $\ell$.

\begin{figure*}[t!]
\includegraphics[width = \textwidth]{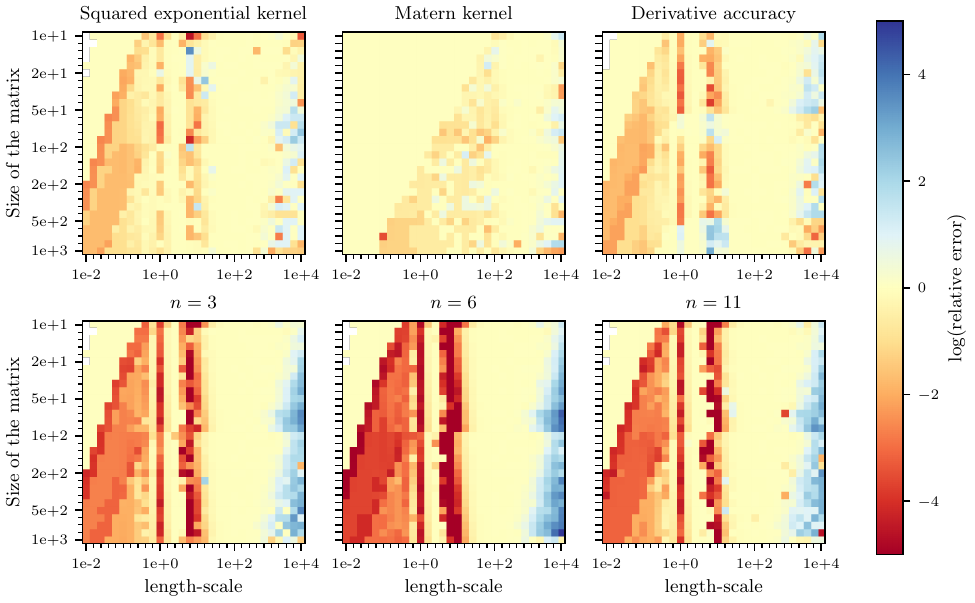}
\caption{Accuracy assessment of \texttt{autonugget-cond}.   
Here we compared \texttt{autonugget-cond} to \texttt{autonugget}, in the settings described in \cref{sec: solver acc,sec: der acc}.
The logarithm of the error relative to \texttt{autonugget} is reported.}
\label{fig: errors_cond}
\end{figure*}

\begin{figure*}[t!]
\centering
\includegraphics[width = \textwidth]{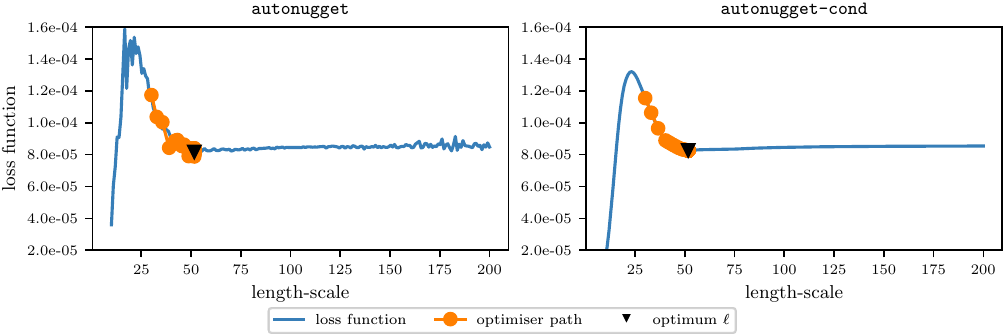}
\caption{Comparison of \texttt{autonugget-cond} with \texttt{autonugget} for the problem in \cref{sec: appl}.}
\label{fig: rmse_opt_cond}
\end{figure*}

\subsection{Wall Times}

\cref{fig: times} shows the total time taken by each of the methods in \cref{sec: solver acc}, as a function of the size of the matrix.

While \texttt{autonugget} has the same scaling as solving the linear system (i.e., it scales cubically with dimension), in terms of FLOPS, it involves solving additional linear systems (for the extrapolation) and performing additional SVDs (to identify the nugget). Therefore, as we see in \cref{fig: times}, \texttt{autonugget} does have a higher cost, but it is higher by a constant factor. For our recommended settings (extrapolation with a linear polynomial), only one additional linear solve is required, and so the additional cost is dominated by the search procedure used to identify a nugget.

To mitigate the cost of \texttt{autonugget}, when the dimension of the matrix is large we suggest the use of lower-cost alternative \texttt{autonugget-cond}, which also provides a comparable performance to \texttt{autonugget} (\cref{fig: errors_cond}). 
We re-emphasise that our method has not been fully optimised for speed, so there are likely significant gains to be made here. Furthermore, the goal of our method is to provide a solver that is (possibly slow but) reliable, tuning-free and compatible with JAX, for prototyping and thus the higher cost of our method is a secondary concern.

\begin{figure*}[t!]
\centering
\includegraphics[width = 0.5\textwidth]{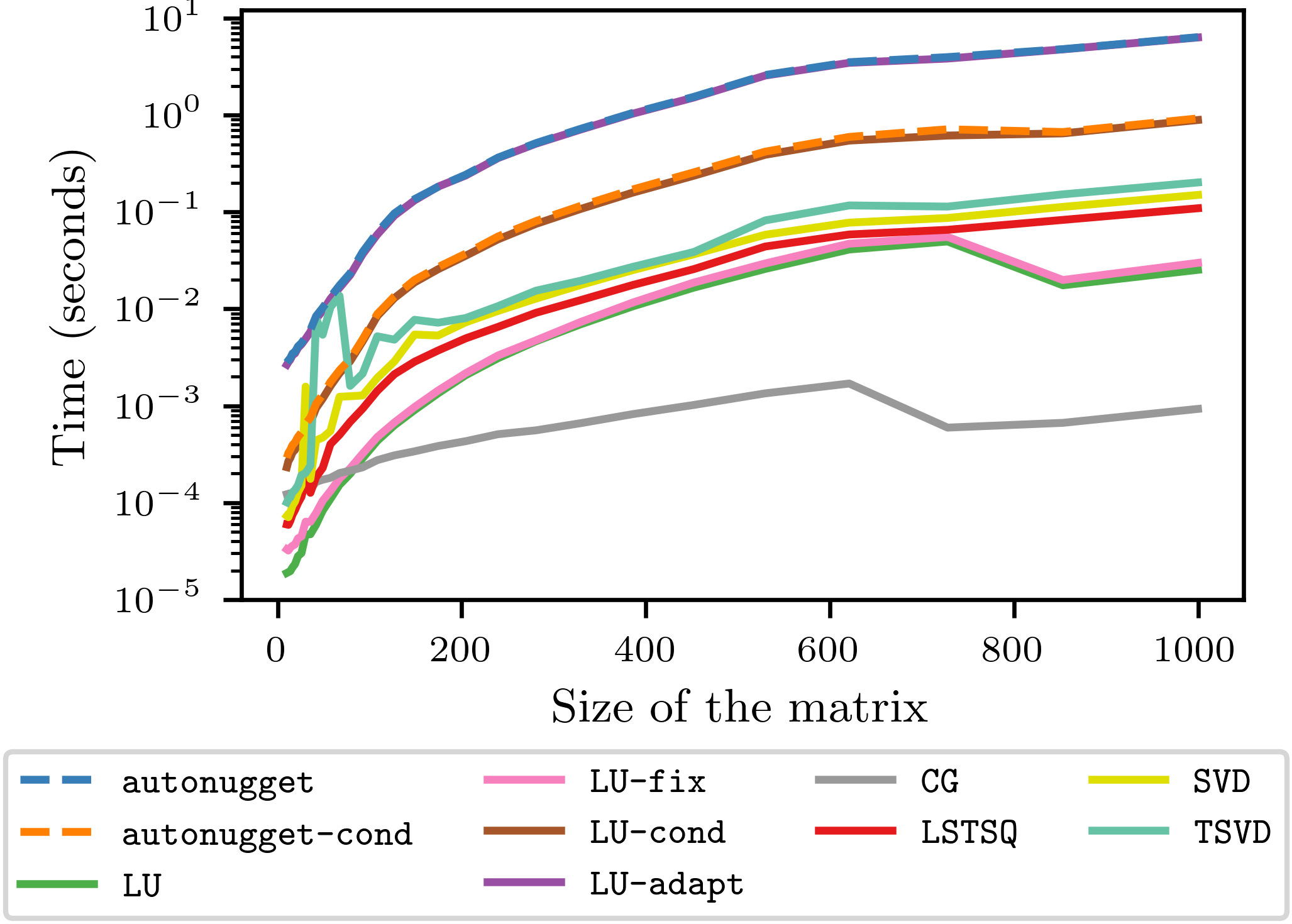}
\caption{Comparison of wall time \texttt{autonugget} with other methods for the experiment in \cref{sec: solver acc}.}
\label{fig: times}
\end{figure*}

\end{document}